\title[Optimization in Machine Learning: A Distribution Space Approach]{Optimization in Machine Learning: A Distribution Space Approach}
\author{%
\Name{Yongqiang Cai} \Email{matcyon@nus.edu.sg}\\
 \addr Department of Mathematics,
 National University of Singapore,
 Singapore
 \AND
 \Name{Qianxiao Li} \Email{qianxiao@nus.edu.sg}\\
 \addr Department of Mathematics,
 National University of Singapore,
 Singapore%
 \AND
 \Name{Zuowei Shen} \Email{matzuows@nus.edu.sg}\\
 \addr Department of Mathematics,
 National University of Singapore,
 Singapore%
}
\newcommand{\Ucal}{\mathcal{U}}
\newcommand{\R}{\mathbb{R}}
\DeclareMathOperator{\conv}{conv}
\DeclareMathOperator{\relu}{ReLU}
\DeclareMathOperator{\argmin}{arg\,min}
\begin{document}

\maketitle

\begin{abstract}%
    We present the viewpoint that optimization problems encountered in machine learning can often be interpreted as minimizing a convex functional over a function space, but with a non-convex constraint set introduced by model parameterization. This observation allows us to repose such problems via a suitable relaxation as convex optimization problems in the space of distributions over the training parameters. We derive some simple relationships between the distribution-space problem and the original problem, e.g. a distribution-space solution is at least as good as a solution in the original space. Moreover, we develop a numerical algorithm based on mixture distributions to perform approximate optimization directly in distribution space. Consistency of this approximation is established and the numerical efficacy of the proposed algorithm is illustrated on simple examples. In both theory and practice, this formulation provides an alternative approach to large-scale optimization in machine learning.

\end{abstract}


%
\section{Introduction}
\label{sec:intro}

Many optimization problems arising in machine learning applications are of the form
\begin{align}\label{eq:main_objective}
    \inf_{u \in \Ucal} J[u],
\end{align}
where $J:\Ucal \rightarrow \R$ is a convex loss functional and $\Ucal$ is the model hypothesis space. For example, in least squares regression over a Euclidean domain $\Omega$ with ground truth $f: \Omega \rightarrow \R$ and data distribution $\nu$, the loss functional takes the form $J[u] = \int_\Omega | u(x) - f(x) |^2 d\nu(x)$. In general, $\Ucal$ is an infinite dimensional space, e.g. the space of continuous functions. Hence, directly minimizing $J$ over $\Ucal$ is computationally difficult and we often resort to some parametric form of~\eqref{eq:main_objective}.

Concretely, we consider restricting~\eqref{eq:main_objective} to a parameterized hypothesis space  $\Ucal_W := \{ u_w: w\in W \} \subset \Ucal$, where $W \subset \R^d$ is a finite-dimensional Euclidean space of trainable parameters. Then, we obtain the parameterized optimization problem
\begin{align}\label{eq:para_objective}
    \inf_{u \in \Ucal_W} J[u]
    \equiv
    \inf_{w \in W} J[u_w]
    \equiv
    \inf_{w \in W} L(w)   \qquad   (L(w) := J[u_w]).
\end{align}
In fact, the right hand side of~\eqref{eq:para_objective} is the more familiar form encountered in practice, where the additional structure of $J,u_w$ is not explicitly represented, and absorbed into a common non-convex loss function $L$. For example, in the case of linear regression, we set $u_{v,b}(x) = v^T x + b$ and so we obtain the supervised learning problem $\min_{v,b} L(v,b) \equiv \int_{\Omega} | v^T x + b - f(x) |^2 d\nu(x)$. However, in this paper we will exploit the aforementioned additional structure.

One central challenge from the optimization viewpoint is that, although~\eqref{eq:para_objective} is finite-dimensional, it is in general non-convex in $w$ unless $u_w$ is linear in $w$. Consequently, if one applies (stochastic) gradient-based solution methods, the solution quality not only depends on the initial condition~\citep{Glorot2010Understanding} but also on a host of complex interactions between the optimization algorithm and the loss landscape in $W$ space~\citep{Jain2017Non,dauphin2014identifying,wu2017towards}.

This motivates a natural question: can we repose~\eqref{eq:main_objective} in a computationally tractable manner while retaining its highly desirable convexity characteristics? In this work, we introduce a formulation where this can be achieved. The main idea is going into the dual space of $w \mapsto u_w$, which is the space of distributions on $W$. In this distribution space, a convex problem can be defined that can be reduced to a finite-dimensional problem via approximation by mixture distributions, leading to implementable algorithms.

\section{Related work}
\label{sec:related_work}

In this section, we discuss some related work and put our paper in perspective of the relevant literature.
First, one primary motivation for this work is the recent advancements on optimization in over-parameterized settings~\citep{Du2018Gradient,Li2018Learning,Chizat2018Global,allen2018convergence,arora2018optimization,ma2017power,bassily2018exponential,oymak2018overparameterized,Martin2018Implicit,martin2019traditional}, typically applied to understand various interesting phenomenon arising from practical deep learning. For example, it is found empirically that although deep learning models are highly over-parameterized, they can generalize to unseen data~\citep{Zhang2016Understanding}. Among many, one suggested explanation is that deep neural networks have implicit regularization properties, which can be quantified by studying empirical distributions of the trained weights~\citep{Martin2018Implicit}.
Such distribution space ideas also underlie recent work on mean-field limits of deep neural networks~\citep{Chizat2018Global, Mei2018mean, Rotskoff2018Parameters, Sirignano2018Mean},
in which it is argued that trained weights in large neural networks behave like they are sampled from a distribution, and this fact can be used to modify and improve training algorithms~\citep{Xiao2018Dynamical}.

Although motivated by these empirical and theoretical findings, our work concerns a more basic problem: from the outset, can we formulate the optimization of non-convex functions directly in distribution space? This highlights the major difference in our work, in that we are not concerned with ``effective behavior'' of models trained with traditional methods in large system limits. Rather, we study the possibility of formulating the training problem directly on the space of distributions over a fixed parameterized model space. In a similar vein, this also sets the current approach apart from previous studies on convexifying neural networks~\citep{Bengio2006Convex}, where the convex problem is defined on the final-layer parameters, instead of the distribution space over parameters. Furthermore, the results here do not only apply neural networks as they do not depend on their specific structure other than the fact that the optimization problem takes the form~\eqref{eq:para_objective}.
The current line of investigation is also related to the Barron space framework proposed in~\citep{Ma2019Barron}, where the authors studied approximation and generalization properties of neural networks by exploiting the distribution-space viewpoint. In contrast, we focus on the optimization aspect and algorithm development.

On the algorithmic side, there is a line of papers showing that gradient descent training of very wide neural networks can be viewed as a Wasserstein gradient flow in the space of distributions over the trainable parameters~\citep{Chizat2018Global, Mei2018mean, Rotskoff2018Parameters, Sirignano2018Mean, Wei2018margin}. The distribution-space viewpoint is also found in these papers, but these mainly analyze the collective properties of a large number of gradient descent trajectories with random initial conditions. In some sense, this can be thought of as a ``particle method''~\citep{Dean1996Langevin}
for solving an optimization problem in distribution space. In contrast, our method based on mixture distributions works directly in the distribution space, and can be viewed as a ``subspace method''. It is worth reiterating that in our formulation, we are not taking limits of network widths. Rather, we start with a fixed model architecture (which can be big or small, and is not limited to neural networks) and discuss how we can formulate and solve training problems in the space of distributions.

The present approach is also related to the so-called ``random feature models'', which also make use of distributions over parameter in a kernel mapping~\citep{Rahimi2008Random, Rahimi2009Weighted, Sinha2016Learning}.
The key difference is that most of these approaches rely on a fixed distribution, which in the limit of an infinite number of features can span the desired function spaces. However, the present approach actually trains the distribution and does not require the regime of large feature numbers for our analytical results.

Finally, since the distribution space formulation requires averaging models derived from parameter sampling at inference time, this naturally connects the current work with classical literature on model ensembling~\citep{Rokach2010Ensemble, Breiman1996Bagging, Freund1996Experiments}, where modern techniques such as Dropout \citep{Srivastava2014Dropout} and DropConnect \citep{Wan2013Regularization} provide ways of approximately combining exponentially many different neural network models which share some parameters to prevent overfitting.
In fact, in the current framework, the classical ensembling techniques can be regarded as approximating a distribution by finite or countable convex or linear combinations of point masses. In this sense, we can view our proposed methods as effectively lifting the point mass restriction and consider more general classes of basic probability distributions.

\section{Optimization problem in distribution space}
\label{sec:opt_distribution}

In this section, we outline our convex formulation of~\eqref{eq:para_objective}. Although the following results are expected to hold for general Banach spaces, for concreteness, we hereafter take $\Ucal = C^0(\Omega)$ and consider compact sets $\Omega$ and $W$. The simple but crucial observation is that ~\eqref{eq:para_objective} can be rewritten as an optimization problem with a convex loss in function space with a non-convex constraint set:
\begin{align}
    \inf_{u\in \Ucal} J[u] \qquad \text{subject to} \qquad u \in \Ucal_W.
\end{align}
Consequently, the simplest convex relaxation one can adopt is
\begin{align}
    \inf_{u\in \Ucal} J[u] \qquad \text{subject to} \qquad u \in \overline{\conv}(\Ucal_W),
\end{align}
where $\overline\conv(\Ucal_W)$ is the closure of the convex hull of $\Ucal_W$. This has a natural dual-space representation as an optimization problem over distributions
\begin{align}\label{eq:measure_objective}
    \inf_{\mu \in \mathcal{M}}
    F[\mu] :=
    J
    \left[
        \int_{W} u_w d\mu(w)
    \right]
\end{align}
where $\mathcal{M}$ denotes the space of probability measures on $W$. If $\mu$ has a density $\rho$ with respect to the Lebesgue measure, we abuse notation slightly and write $F[\rho] := J \left[\int_{W} u_w \rho(w )dw\right]$. Note that~\eqref{eq:measure_objective} is now a convex problem in $\mu$ since $J$ is convex, the term in its argument is linear in $\mu$ and $\mathcal{M}$ is a convex set. In addition, the functions in $\overline\conv(\mathcal{U}_W)$ can be characterized by $\overline\conv(\mathcal{U}_W) = \{\int_{W} u_w d\mu(w) : \mu \in \mathcal{M}\}$.
In practice, it is often enough to consider probability measures that have a uniformly continuous density function $\rho$. We thus denote by $P$ the space of all such uniformly continuous probability density functions.

In the following, we present some results on the relationships between the distribution space minimization problem \eqref{eq:measure_objective} and the parameterized problem \eqref{eq:para_objective} typically encountered in machine learning. The proofs of the results are found in the Appendix~\ref{sec:proof}.
First, it is easy to show that the optimization over non-singular measures (over $P$) is the same as that overall probably measures  (over $\mathcal{M}$). Moreover, these infima, as one should expect, are at least as good as the infimum for the original problem. The latter can be viewed as a distribution space optimization problem over point masses. This is the content of the next result.

\begin{proposition}\label{prop:inf_relation}
    Suppose that $(x,w) \mapsto u_w(x)$ is
    continuous for $x\in \Omega$, $\gamma$-H\"{o}lder continuous for $w\in W$,
    and $J[\cdot]$ is a convex, $\gamma$-H\"{o}lder continuous functional on $\mathcal{L}^2(\Omega)$, then the following relations  hold:
\begin{align}
    \inf_{\rho \in P} F[\rho]
    =\inf_{\mu \in \mathcal{M}} F[\mu] \le \inf_{w \in W} L(w).
\end{align}
\end{proposition}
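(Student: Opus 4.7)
The statement decomposes into (i) the easy upper bound $\inf_{\mu\in\mathcal{M}} F[\mu] \le \inf_{w\in W} L(w)$ and (ii) the equality $\inf_{\rho\in P} F[\rho] = \inf_{\mu\in\mathcal{M}} F[\mu]$. For (i), I would simply note that for every $w_0 \in W$ the Dirac mass $\delta_{w_0}$ belongs to $\mathcal{M}$, and the Bochner integral $\int_W u_w \, d\delta_{w_0}(w)$ equals $u_{w_0}$, so $F[\delta_{w_0}] = J[u_{w_0}] = L(w_0)$; taking the infimum over $w_0$ yields the inequality. This step needs no H\"older hypothesis.

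For (ii), the direction $\inf_{\rho\in P} F[\rho] \ge \inf_{\mu\in\mathcal{M}} F[\mu]$ is immediate from the embedding $P \hookrightarrow \mathcal{M}$ identifying a density $\rho$ with the absolutely continuous measure $\rho(w)\,dw$. The substantive direction is the reverse inequality, which I would establish by approximation: given $\mu \in \mathcal{M}$ and $\epsilon > 0$, construct $\rho_\epsilon \in P$ with $F[\rho_\epsilon] \to F[\mu]$ as $\epsilon \to 0$. The natural candidate is a mollification $\rho_\epsilon = \eta_\epsilon \ast \mu$ with a smooth compactly supported kernel $\eta_\epsilon$ of scale $\epsilon$; the resulting density is $C^\infty$ and thus uniformly continuous.

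To estimate $|F[\rho_\epsilon] - F[\mu]|$, I would bound the $L^2(\Omega)$ distance between $\int u_w \rho_\epsilon(w)\,dw$ and $\int u_w \, d\mu(w)$ via a transport coupling: writing the difference as $\int\!\!\int (u_w - u_{w'}) \, d\pi(w,w')$ for a plan $\pi$ supported in $\{\|w-w'\|\le \epsilon\}$, the $\gamma$-H\"older continuity of $w \mapsto u_w$ (which, by the joint continuity on the compact set $\Omega \times W$, upgrades to a uniform $L^2(\Omega)$ H\"older bound) gives a deviation of order $\epsilon^\gamma$. Applying the $\gamma$-H\"older functional $J$ then yields $|F[\rho_\epsilon] - F[\mu]| = O(\epsilon^{\gamma^2}) \to 0$, completing the approximation argument.

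The main obstacle I anticipate is the boundary of $W$: standard mollification leaks mass outside of $W$ and after restriction the total mass drops, requiring renormalization. A clean workaround is to extend the map $w \mapsto u_w$ to an open neighborhood of $W$ while preserving its $\gamma$-H\"older regularity (e.g.\ by Whitney-type extension or by composition with the nearest-point retraction onto $W$), so the mollification argument can be run in $\mathbb{R}^d$ without artifact; the mass lost upon restriction to $W$ is controlled by the $\mu$-measure of an $\epsilon$-collar of $\partial W$, which vanishes as $\epsilon \to 0$ and contributes only a higher-order correction to the estimate above. Once this boundary bookkeeping is handled, the remaining computations are routine.
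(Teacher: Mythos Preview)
Your proposal is correct and follows essentially the same line as the paper's proof: the paper also mollifies $\mu$ by a kernel (a Gaussian $\varphi_\sigma$) to produce $\rho_\sigma \in P$, then uses the Fubini identity $\int u_w \rho_\sigma(w)\,dw - \int u_w\,d\mu(w) = \iint (u_w - u_v)\varphi_\sigma(w-v)\,dv\,d\mu(w)$---which is exactly your transport-coupling estimate---together with the $\gamma$-H\"older bound on $w\mapsto u_w$ and then on $J$. The only cosmetic differences are that the paper attributes the inequality $\inf_\mu F[\mu]\le\inf_w L(w)$ to Jensen rather than to the Dirac-mass embedding, and that the paper does not explicitly address the boundary leakage you flag (your treatment here is in fact more careful than the original).
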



A natural follow up question is when equality holds in Prop.~\ref{prop:inf_relation}. It turns out that if $J$ is linear, then we have equality. Moreover, in this case \eqref{eq:measure_objective} can be regarded as a convex dual of \eqref{eq:para_objective}. Detailed discussion on this can be found in Appendix~\ref{sec:proof}. In the more general case, we show in the following that the difference between the optimal loss values of the two formulations depend on how dense $\Ucal_W$ is in its convex hull.

\begin{definition}[$\varepsilon$-dense]
    A subset $\mathcal{V}$ of a metric space $\mathcal{U}$ is $\varepsilon$-dense in $\mathcal{U}$ for a given positive number $\varepsilon$ if for any $u$ in $\mathcal{U}$, there exists $v$ in $\mathcal{V}$ such that the distance between $u$ and $v$ is less than $\varepsilon$.

\end{definition}

\begin{proposition}\label{prop:eps_dense}
    Assume the same conditions in Prop. \ref{prop:inf_relation} and further that $\mathcal{U}_W$ is $\varepsilon$-dense in $\overline\conv(\mathcal{U}_W)$. Then, there exists a $w^* \in W$ and a constant $C$ independent of $w^*$ such that
    \begin{align}
        \inf_{\mu\in \mathcal{M}} F[\mu]
        \le
        \inf_{w\in W} L(w)
        \le
        L(w^*)
        \le
        \inf_{\mu\in \mathcal{M}} F[\mu] + C\varepsilon^\gamma.
    \end{align}
\end{proposition}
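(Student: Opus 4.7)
The first inequality is Prop.~\ref{prop:inf_relation} and the second is tautological, so the substantive task is to construct a single $w^* \in W$ realizing the last inequality. My plan is (i) to produce a minimizer $\mu^* \in \mathcal{M}$ of $F$, so that the barycenter $u^* := \int_W u_w\,d\mu^*(w)$ lies in $\overline{\conv}(\mathcal{U}_W)$; (ii) to invoke $\varepsilon$-density to pick $w^* \in W$ with $u_{w^*}$ close to $u^*$ in $C^0(\Omega)$; and (iii) to upgrade this proximity to $|L(w^*) - F[\mu^*]| \le C\varepsilon^\gamma$ via the H\"older regularity of $J$.

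For step (i), compactness of $W$ makes $\mathcal{M}$ weak-* compact by Banach--Alaoglu. The joint continuity of $(x,w)\mapsto u_w(x)$ on the compact set $\Omega\times W$ forces the barycenter map $\mu\mapsto \int_W u_w\,d\mu(w)$ to be continuous from $(\mathcal{M},\text{weak-*})$ into $C^0(\Omega)$; composing with the continuous functional $J$ yields a continuous $F$, which therefore attains its infimum at some $\mu^*$. Step (ii) is then immediate: since $u^* \in \overline{\conv}(\mathcal{U}_W)$, $\varepsilon$-density delivers some $v \in \mathcal{U}_W$ with $\|v - u^*\|_{C^0(\Omega)} < \varepsilon$, and by definition of $\mathcal{U}_W$ we have $v = u_{w^*}$ for some $w^* \in W$.

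For step (iii), compactness of $\Omega$ gives $\|u_{w^*} - u^*\|_{L^2(\Omega)} \le |\Omega|^{1/2}\varepsilon$, and the $\gamma$-H\"older continuity of $J$ on $L^2(\Omega)$ then yields
\begin{align*}
L(w^*) = J[u_{w^*}] \le J[u^*] + C_J\,\|u_{w^*} - u^*\|_{L^2}^\gamma \le F[\mu^*] + C\varepsilon^\gamma,
\end{align*}
with $C := C_J |\Omega|^{\gamma/2}$ depending only on $J$, $\gamma$, and $\Omega$, hence independent of $w^*$. Since $F[\mu^*] = \inf_{\mu\in\mathcal{M}} F[\mu]$, the entire chain of inequalities closes.

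The main point requiring care is step (i), the attainment of the infimum; if one prefers to avoid invoking weak-* compactness, an alternative is to work with a minimizing sequence $\mu_n$, extract $w_n^* \in W$ from $\varepsilon$-density applied to each barycenter $u_n$, and then use compactness of $W$ together with continuity of $L$ to pass to a single $w^*$ along a subsequence. Beyond this, everything reduces to matching the metric in which $\varepsilon$-density is formulated (the $C^0$ metric inherited from $\mathcal{U} = C^0(\Omega)$) with the $L^2$ metric underlying the H\"older regularity of $J$, which costs only a constant factor absorbed into $C$.
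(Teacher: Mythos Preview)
Your argument is correct, but the paper proceeds more simply and avoids your step (i) altogether. Instead of producing an actual minimizer via weak-* compactness, the paper just picks, by definition of the infimum, some $\mu^*\in\mathcal{M}$ with $F[\mu^*]\le \inf_{\mu}F[\mu]+\varepsilon^\gamma$, applies $\varepsilon$-density to its barycenter $g=\int_W u_w\,d\mu^*$ to obtain $w^*$ with $\|g-u_{w^*}\|\le\varepsilon$, and then combines $L(w^*)\le F[\mu^*]+C_\gamma\varepsilon^\gamma$ with the near-minimizer bound to get $L(w^*)\le \inf_\mu F[\mu]+(C_\gamma+1)\varepsilon^\gamma$. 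This is essentially the ``alternative'' you sketch at the end, except that no subsequence extraction in $W$ is needed: one $\mu^*$, one $w^*$, done. What your route buys is a marginally cleaner constant and, as a by-product, the independent fact that $\inf_{\mu\in\mathcal{M}}F[\mu]$ is actually attained; what the paper's route buys is that no topological machinery (Banach--Alaoglu, continuity of the barycenter map in the weak-* topology) is invoked at all.
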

As a consequence of Prop. \ref{prop:eps_dense}, if $\Ucal_W$ is an universal approximating class of $\Ucal$, then we would expect $\inf_\mu F(\mu) = \inf_w L(w)$. For example, this is the case for a variety of neural network architectures~\citep{Barron1993Universal,Hornik1989Multilayer,Cybenko1989Approximation,Leshno1993Multilayer}. Note however that this does not imply that every minimizer of $F(\mu)$ must be in the form of a point mass.

\begin{remark}
    Instead of convex combinations, another way to relax~\eqref{eq:para_objective} is to take general linear combinations of functions in $\Ucal_W$. However, in this case, one cannot easily compute the integral over $W$ as Monte Carlo integration is not directly applicable.
\end{remark}

\section{Numerical algorithm}
\label{sec:algo}

In Sec.~\ref{sec:opt_distribution}, we showed that the distribution space problem $\inf_{\rho} F[\rho]$ is a convex formulation for the parameterized problem~\eqref{eq:para_objective}. However, it is still infinite dimensional and some parameterization is required in order to give rise to realizable algorithms. In this section, we present such an approach based on decomposing $\rho$ as a convex combination of ``simple'' distributions
\begin{align}\label{eq:mixture_form}
    \rho_{n,\alpha}(w) := \sum_{i=1}^{n} \alpha_i \phi_i(w),
    \qquad
    \alpha \in \Lambda_n,
\end{align}
where $\Lambda_n$ denotes the $n$-dimensional probability simplex, i.e. $\alpha \in \Lambda_n$ if $\alpha_i \in [0,1]$ for all $i=1,\dots,n$ and $\sum_{i=1}^n \alpha_i = 1$
Here, the $\phi_i$'s are chosen, fixed distributions which are simple to sample from, and act as building blocks for approximating more complex distributions. Therefore, $\inf_\rho F[\rho]$ can be replaced by $\min_\alpha F[\rho_{n,\alpha}]$, i.e. an optimization problem involving coefficients of the mixture distribution. Note that this is still a convex optimization problem in $\alpha$, but is now finite dimensional. Fig.~\ref{fig:function_space} illustrates the overall approach: the original problem is over a non-convex set $\Ucal_W$ (Fig.~\ref{fig:function_space}(a)), which we extend to $\overline\conv(\Ucal_W)$ (Fig.~\ref{fig:function_space}(b)). Finally, we restrict it to a $n$-dimensional problem in $\Ucal_n := \{ \int_{W} u_w \rho_{n,\alpha}(w) dw: \alpha \in \Lambda_n  \}$. This approach is \emph{consistent} if $\phi_i$'s are picked in such a way that $\min_\alpha F[\rho_{n,\alpha}] \rightarrow \inf_\rho F[\rho]$ as $n\rightarrow\infty$. There are of course many ways to ensure this, and in the following we outline one such method based on scaling and translating simple distributions.

\begin{figure}[thb!]
    \centering
    \subfigure[Direct parameterization]{\includegraphics[width=0.4\textwidth]{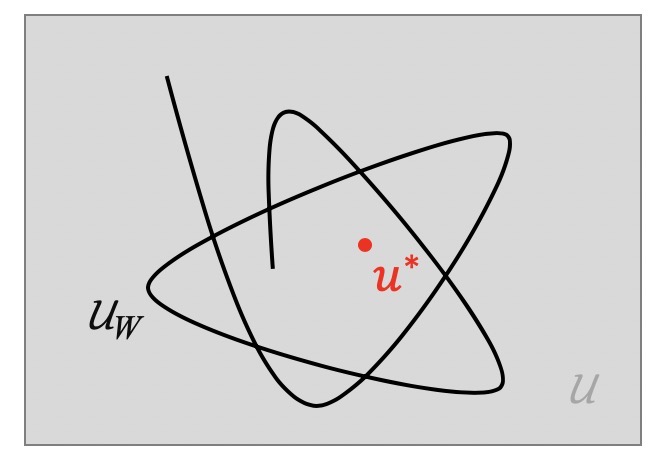}}
    \subfigure[Convex combination]{\includegraphics[width=0.4\textwidth]{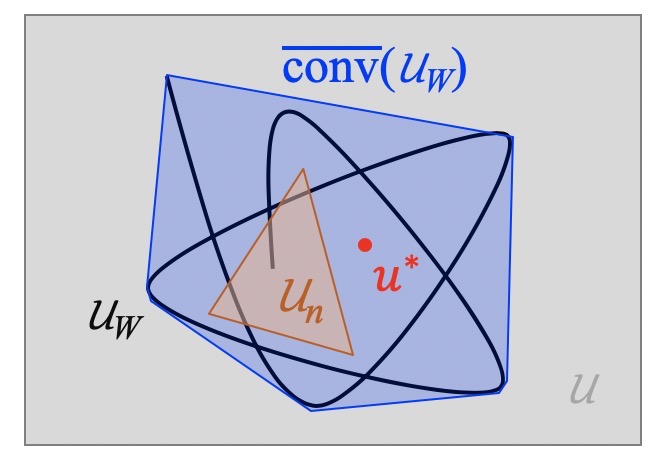}}
    \caption{
        Distribution space approach. We start with the non-convex parameter space problem (a), which we then convexify to a distribution-space problem in $\overline\conv(\Ucal_W)$ (b), and finally we restrict it to a finite dimensional problem via mixture distribution approximation on $\Ucal_n$.
    }
    \label{fig:function_space}
\end{figure}

We begin with a definition of the \emph{approximate identity}, which is a class of scaled versions of an envelope function that can be used to approximate a point mass.
\begin{definition}
    Let $\varphi \in C_c(W)$
    be a probability density function which is non-negative, uniformly continuous, 
    satisfies
    $\int_{\mathbb{R}^d} \varphi(w) dw = 1$, and
    $\varphi(w) \le \beta(1+\|w\|)^{-d-\eta}$ for some positive constants $\beta$ and $\eta$.
    For each $0<\sigma\le1$, define $\varphi_\sigma(w):=\sigma^{-d} \varphi(\sigma^{-1}w)$. The family of functions $\{\varphi_\sigma(w)\}$ is called an approximate identity. 
\end{definition}
In one dimensional case, scaled triangular hat functions and truncated Gaussian functions are examples of approximate identities.
Now, we form our parameterization of $P$ by the following mixture family $P_\varphi$
\begin{align}
    P_\varphi = \Big\{\rho(w):  \rho(w) = \sum_{i=1}^n \alpha_i \varphi_{\sigma_i}(w-\mu_i), \mu_i \in \mathbb{R}^d, \sigma_i \in (0,1], \alpha \in \Lambda_n,  n \in\mathbb{N}^+\Big\}.
\end{align}
Note that in the definition of $P_\varphi$, one can further restrict $\mu_i \in \mathbb{Q}^d$ and $\sigma_i \in \{1/k: k\in\mathbb{N}^+\}$ using similar techniques in mixture distribution approximations~\citep{Nestoridis2011Universal,Bacharoglou2010Approximation}. We now show that the class $P_\varphi$ is sufficiently large, and a $n$-term approximation from it is consistent as $n\rightarrow \infty$.

\begin{theorem}[Approximation]\label{th:approximation_infF}
    Under the same conditions as in Prop.~\ref{prop:inf_relation}, we have
\begin{align}
    \inf_{\rho \in P_\varphi} F[\rho]
    = \inf_{\rho \in P} F[\rho]
    =\inf_{\mu \in \mathcal{M}} F[\mu].
\end{align}
Furthermore, for any fixed accuracy $\varepsilon>0$, there exists a constant $C>0$ and a $n$-component mixture density $\rho_{n,\alpha} \in P_\varphi$ such that
\begin{align}\label{eq:approx_eps_n}
    F[\rho_{n,\alpha}] - \inf_{\rho \in P} F[\rho]
    \le \big(\varepsilon + \tfrac{C}{n}\big)^{\gamma/2}, \forall n \in \mathbb{N}^+.
\end{align}
\end{theorem}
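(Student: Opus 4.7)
My plan is to establish the chain of equalities first, then the quantitative approximation. Since clearly $P_\varphi \subset P \subset \mathcal{M}$, we get $\inf_{P_\varphi} F \ge \inf_{P} F \ge \inf_{\mathcal{M}} F$ for free. For $\inf_P F \le \inf_{\mathcal{M}} F$, I would mollify any $\mu \in \mathcal{M}$ by the approximate identity, $\rho_\sigma := \varphi_\sigma \ast \mu \in P$; using H\"older continuity of $w\mapsto u_w$ together with the $\gamma$-th moment bound $\int \|z\|^\gamma \varphi_\sigma(z)\,dz = O(\sigma^\gamma)$, one sees $\|\int u_w \rho_\sigma(w)\,dw - \int u_w\,d\mu(w)\|_{\mathcal{L}^2} = O(\sigma^\gamma)$, and then H\"older continuity of $J$ gives $F[\rho_\sigma] \to F[\mu]$ as $\sigma \to 0$. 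The remaining inequality $\inf_{P_\varphi} F \le \inf_{P} F$ will follow from the quantitative bound by sending $\varepsilon \to 0$ and $n \to \infty$.

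For the quantitative bound, given $\varepsilon > 0$ I would first pick $\rho^\ast \in P$ with $F[\rho^\ast] \le \inf_{\rho \in P} F[\rho] + \eta$ for some small $\eta$ to be fixed at the end. Then I would draw $w_1,\dots,w_n$ iid from $\rho^\ast$ and set $\rho_{n,\alpha}(w) := \tfrac{1}{n}\sum_{i=1}^n \varphi_\sigma(w-w_i)$, which belongs to $P_\varphi$ with $\alpha_i = 1/n$, $\mu_i = w_i$, $\sigma_i = \sigma$. Writing $G_{n,\alpha} := \int u_w \rho_{n,\alpha}(w)\,dw$ and $G^\ast := \int u_w \rho^\ast(w)\,dw$, H\"older continuity of $J$ reduces the task to controlling $\|G_{n,\alpha} - G^\ast\|_{\mathcal{L}^2}$.

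I would split this into an ``approximate identity'' piece and a ``Monte Carlo'' piece: setting $V_\sigma(w) := \int u_{w+z}\varphi_\sigma(z)\,dz$,
$$
G_{n,\alpha} - G^\ast = \tfrac{1}{n}\sum_i \bigl[V_\sigma(w_i) - u_{w_i}\bigr] \;+\; \Bigl(\tfrac{1}{n}\sum_i u_{w_i} - \int u_w \rho^\ast(w)\,dw\Bigr).
$$
The first piece is controlled pointwise by H\"older continuity of $w\mapsto u_w$, giving a deterministic bound $C_1 \sigma^\gamma$ (the tail assumption $\varphi(w) \le \beta(1+\|w\|)^{-d-\eta}$ ensures the relevant moment is finite). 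The second piece is an iid-centered average, whose expected squared $\mathcal{L}^2$ norm is at most $C_2/n$ by the standard variance computation, using boundedness of $u_w$ on the compact set $\Omega \times W$. Markov's inequality then guarantees a realization with $\|G_{n,\alpha} - G^\ast\|_{\mathcal{L}^2}^2 \le C_1' \sigma^{2\gamma} + C_2'/n$. Tuning $\sigma$ so that $C_1' \sigma^{2\gamma}$ is comparable to $\varepsilon$, then invoking the $\gamma$-H\"older property of $J$ (which converts the $\mathcal{L}^2$ estimate to the $\gamma/2$-power of its square), and finally choosing $\eta$ small enough to be absorbed into the constants, yields $F[\rho_{n,\alpha}] - \inf_{\rho \in P} F[\rho] \le (\varepsilon + C/n)^{\gamma/2}$.

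The main obstacle is the bookkeeping rather than any deep step. H\"older continuity of $u_w$ is stated only on the compact $W$, yet convolution with $\varphi_\sigma$ displaces $w$ slightly outside $W$, so one must either extend $u_w$ continuously to a neighborhood of $W$ or restrict to $\sigma$ small enough to preserve the relevant support. Equally, one must carefully track how the constants $C_1, C_2$ combine through the H\"older estimate for $J$ so that the final bound takes the asserted form $(\varepsilon + C/n)^{\gamma/2}$ rather than the looser $\varepsilon^{\gamma/2} + C'/n^{\gamma/2}$, which is the reason the $\mathcal{L}^2$ estimate is kept in squared form until the very last step.
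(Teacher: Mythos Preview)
Your argument is correct, but it proceeds differently from the paper. The paper first proves a density-level approximation lemma: for any $\rho\in P$ it constructs $\rho_n\in P_\varphi$ with $\|\rho_n-\rho\|_{L^2(W)}^2\le \varepsilon + C/n$, by (i) mollifying $\rho$ to $\rho_\sigma=\varphi_\sigma\ast\rho$, (ii) replacing the convolution integral by a Riemann sum $\rho_{\sigma,m}$, and (iii) invoking the Maurey--Barron lemma to extract an $n$-term convex combination close to $\rho_{\sigma,m}$ in $L^2(W)$. Only afterwards does it transfer this to $F$ via the crude bound $|F[\tilde\rho]-F[\rho]|\le C_\gamma\bigl\|\int u_w(\tilde\rho-\rho)\bigr\|^\gamma\le C'\|\tilde\rho-\rho\|^\gamma$, using boundedness of $u$. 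You instead work directly in $\mathcal L^2(\Omega)$ on the induced functions $\int u_w\rho(w)\,dw$, and obtain the $1/n$ rate by the probabilistic (first-moment) argument on i.i.d.\ samples from $\rho^\ast$ --- which is precisely Maurey's proof applied in the image space rather than the density space. Your route is more direct and avoids the detour through $L^2(W)$; the paper's route has the side benefit of yielding a standalone density-approximation result (their Lemma on $P_\varphi$ being $L^2$-dense in $P$ with the $\varepsilon+C/n$ rate), which is of independent interest. The boundary issue you flag (convolution pushing mass slightly outside $W$) and the final absorption of the near-optimality slack $\eta$ into the $(\varepsilon+C/n)^{\gamma/2}$ form are handled no more carefully in the paper than in your sketch; both treat these as routine bookkeeping.
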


The preceding analysis shows that the mixture distribution approximation not only preserves convexity but is also consistent (Thm.~\ref{th:approximation_infF}).
In other words, given a large enough $n$, a $n$-component mixture distribution can be used to represent $\rho$ approximately, thus we can instead solve a finite-dimensional optimization problem to determine the optimal mixture coefficients. This is the basic idea of our proposed algorithm, and in the following, we shall describe more concretely its various computational aspects.

The first point is that of inference. In usual machine learning models, given a sample $x$, inference amounts to simply evaluating the trained function $x \mapsto u_{w^*}(x)$ to produce a prediction. In our formulation, however, we obtain from training a distribution $\rho^*$ over $W$, and the inference step is
\begin{align}
    x \mapsto \int_W u_w(x) \rho^*(w) dw
    \approx
    \frac{1}{R} \sum_{i=1}^{R} u_{w_i} (x),
    \qquad
    w_i \overset{i.i.d.}{\sim} \rho^*,
\end{align}
where $R$ is taken sufficiently large so that the Monte-Carlo integration is accurate enough. In particular, this requires repeated sampling of the distribution $\rho^*$.
In the approach based on the mixture of distributions outlined previously, sampling can be done by first drawing the mixture index $i$ according to the mixture coefficients $\alpha \in \Lambda_n$ and then drawing from the basic distribution $\phi_i$ itself.

On the other hand, the training algorithm is rather simple.
Here we assume that a good mixture basis $\{\phi_i(w):i=1,..,n\} \subset P_\varphi$ is chosen. Hence, we are solving
\begin{align}\label{eq:min_prob_simplex}
    \min_{\alpha \in \Lambda_n} & ~ l(\alpha):=F[\rho_{n,\alpha}]
    \equiv
    F
    \left[
        \sum_{i=1}^{n} \alpha_i \phi_i
    \right].
\end{align}
Observe that this is a finite-dimensional constrained convex optimization problem and can be solved efficiently using projected gradient descent, for which efficient algorithms for projections onto the probability simplex can be used~\citep{Duchi2008Efficient, Wang2013Projection}. The additional complication is that we must use Monte-Carlo integration to estimate the gradients. The algorithm is summarized in Alg.~\ref{alg:A}.

    \begin{remark}
        Assuming the integral $\psi_i(x) := \int u_w(x) \phi_i(w) dw$ can be computed exactly, the optimization problem (\ref{eq:min_prob_simplex}) reduces to the classic approximation using basis $\{\psi_i(x)\}$. However, $\psi_i$ is implicitly given and influenced by either network architecture $u_w$ or the distribution $\phi_i$. Instead of direct methods, Monte Carlo integration can be employed to compute integrals involving $\phi_i(w)$.
    \end{remark}

\begin{algorithm}
    \caption{Projected gradient descent}
    \label{alg:A}
    \begin{algorithmic}
    \STATE Set $\{\phi_i(w):i=1,..,n\}$,
    and the numerical parameters
    $R \in \mathbb{N}^+, k_{max}\in \mathbb{N}^+$, $\epsilon >0$, $\varepsilon>0$.
    \STATE Initialize $k=0$, and $\alpha^{(0)} \in \Lambda_n$.
    \REPEAT
    \STATE Set $\bar u=0$ and randomly choose $R$ samples, $i_1, ...,i_R$, from $\{1,..,n\}$ with probability $\alpha^{(k)}$.

    \FOR{$r = 1$ to $R$}
        \STATE Get a sample $w_r$ form $\phi_{i_r}$, and sum $\bar u = \bar u + \tfrac 1R u_{w_r}$
    \ENDFOR

    \STATE Calculate the gradients:
    $g^{(k)} = \big\langle \tfrac{\delta J}{\delta u}(\bar u),
    \langle u_w, \phi_i\rangle \big\rangle$.

    \STATE Normalize the gradients: $\tilde g^{(k)} = \tfrac{1}{n}(g^{(k)} - \text{mean}(g^{(k)})) / \text{std}(g^{(k)})$.

    \STATE Update and project $\alpha$ to the probability simplex $\Lambda_n$: $\alpha^{(k+1)} = \text{Proj}_{\Lambda_n} \big(\alpha^{(k)} - \varepsilon \tilde g^{(k)} \big)$.
    \STATE $k = k + 1$.
    \UNTIL{$k=k_{max}$ or $\|\tilde g_k\| < \epsilon$}
    \end{algorithmic}
\end{algorithm}

\section{Numerical experiments}
\label{sec:numerics}

In this section, we demonstrate the practical implementation of Alg.~\ref{alg:A}. The overarching goal of these experiments, designed to be simple in nature, is to highlight the key properties of the distribution-space algorithm as expected from our preceding analyses:
\begin{enumerate}
    \item \textbf{Approximation.} As more basic distributions are used in the mixture approximation ($n$ increases), the results become better. In particular, it may exceed direct optimization over the parameter space if the latter is difficult (e.g. highly non-convex), consistent with Prop.~\ref{prop:inf_relation} and Thm.~\ref{th:approximation_infF}.
    \item \textbf{Robustness.} The optimization problem over $\alpha$ should be more stable and insensitive to initialization since it is a convex problem, and we can control its complexity by limiting $n$.
\end{enumerate}
Of course, the distribution space formulation has its own computational drawbacks compared to the traditional approach. First, computation overheads are high. This is because even at inference time, multiple sampling and function evaluation iterations are required (See Appendix~\ref{sec:appendix_experiments} for some quantitative experiments demonstrating this). Furthermore, the computation of the (stochastic) gradient $g^{(k)}$ in Alg.~\ref{alg:A} also requires a number of samplings, which can slow down the training. Finally, the choice of the mixture basis $\{ \phi_i \}$ is not obvious and has to be designed for each application. Thus, a general and comprehensive study of these issues in large-scale practical problems is out of the scope of the current paper and will be the topic of subsequent work. The focus of the following experiments is to demonstrate 1, 2 outlined above, and to highlight the differences between the distribution-space algorithm from the traditional optimization approach in parameter space. Moreover, we wish to gain some insights, where possible, into the origins of these differences in simple settings.

\begin{figure}[thb!]
    \subfigure[1D regression, smooth]{
        \begin{minipage}[r]{0.42\textwidth} \centering 
            \quad\includegraphics[width=0.9\textwidth]{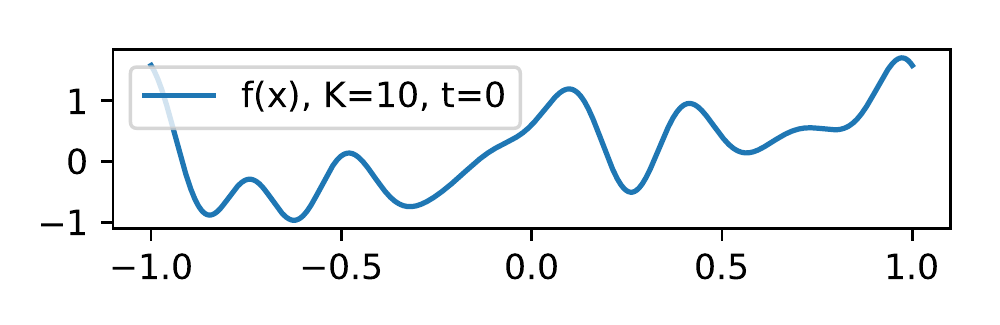}\\
            \includegraphics[width=\textwidth]{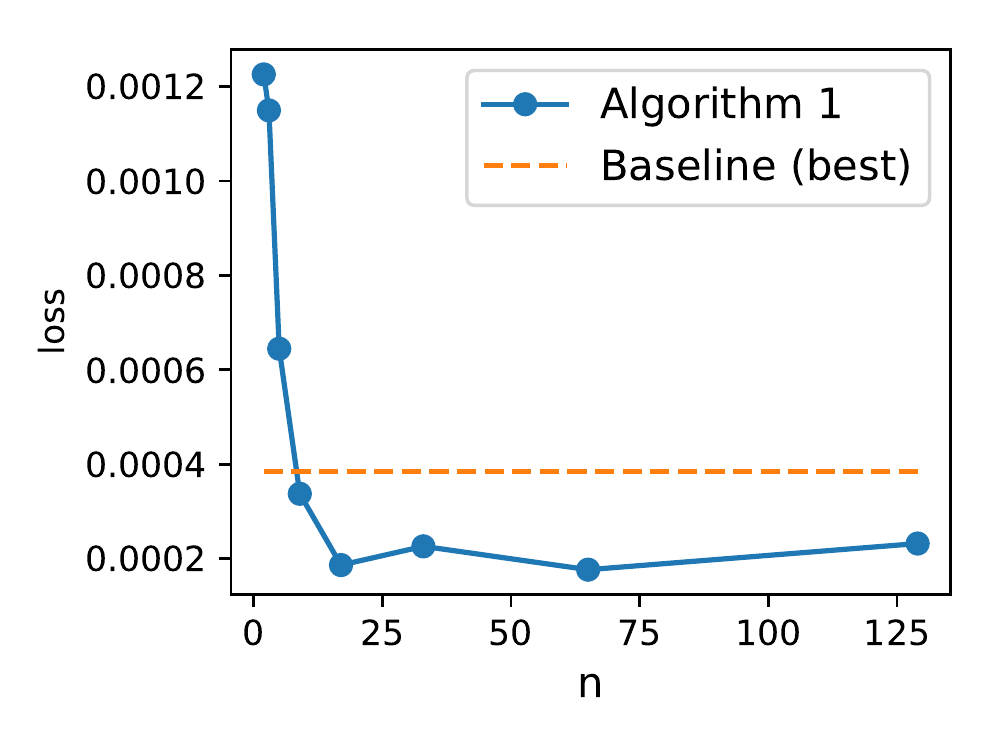}
        \end{minipage}
    }%
    \quad\quad\quad
    \subfigure[1D regression, jump]{
        \begin{minipage}[r]{0.42\textwidth} \centering
            \quad\includegraphics[width=0.9\textwidth]{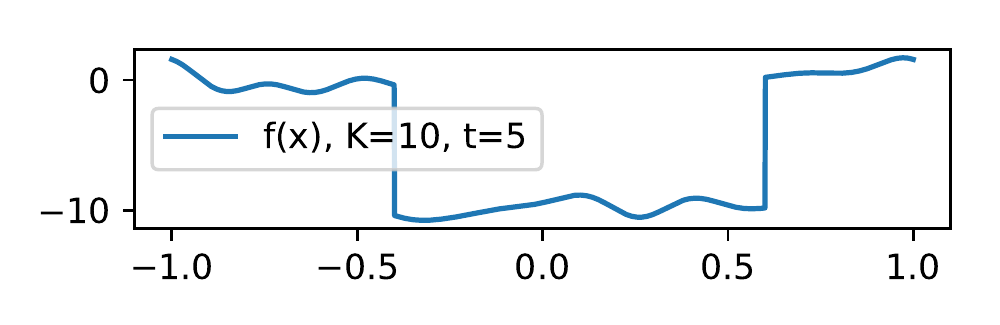}\\
            \includegraphics[width=\textwidth]{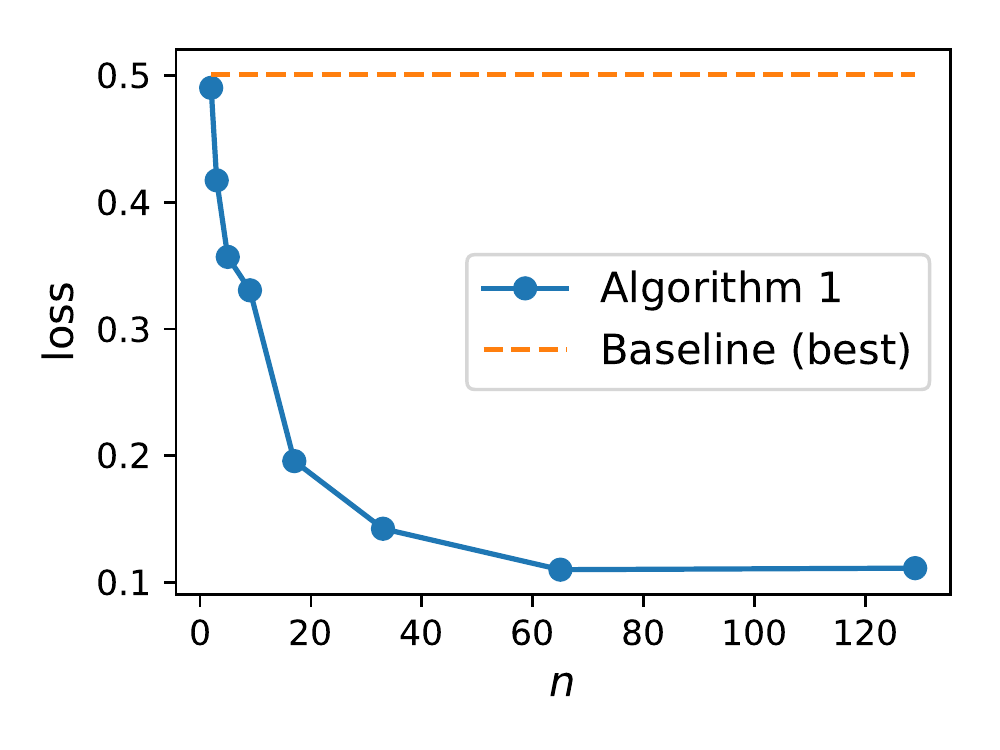}
    \end{minipage}
    }%

        \subfigure[MNIST, $N=100$]{
        \quad\includegraphics[width=0.46\textwidth]{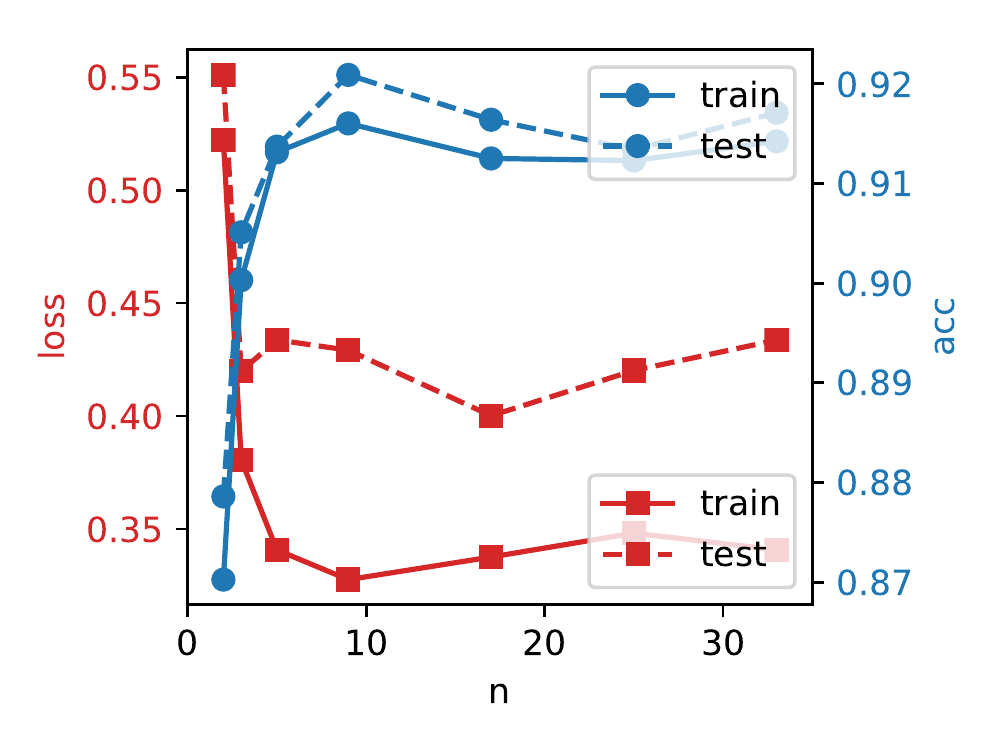}
        }
    \subfigure[MNIST, CNN]{
        \includegraphics[width=0.46\textwidth]{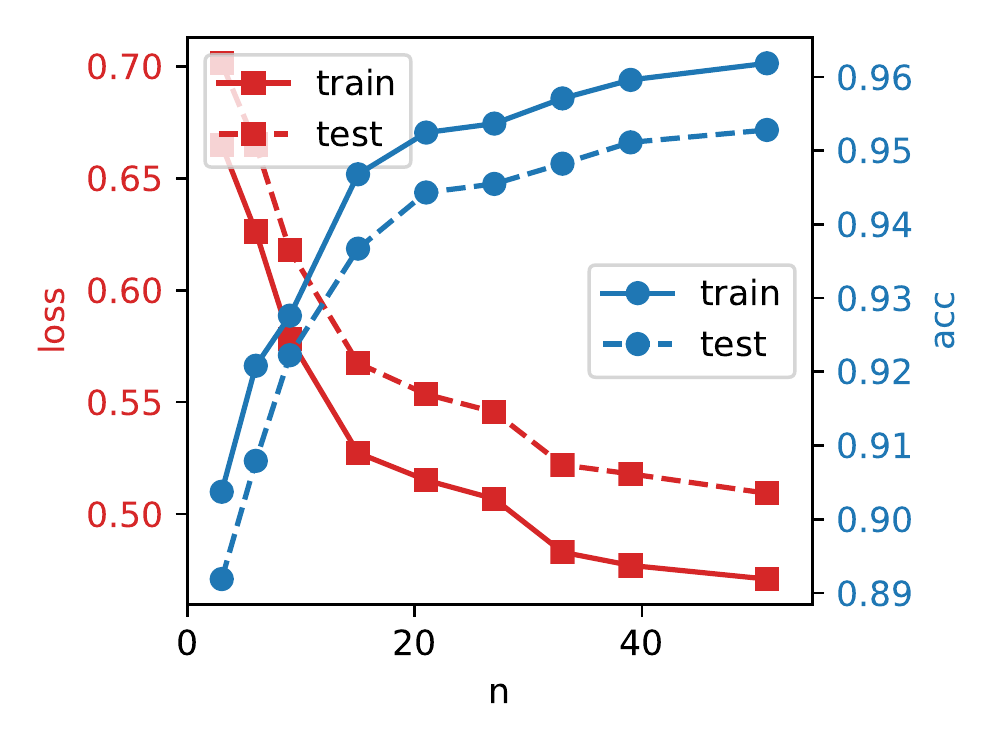}
        }
    \caption{Results demonstrating approximation properties. For all cases we see that approximation improves for increasing $n$. For 1D regression (a,b), for $n$ large enough Alg.~\ref{alg:A} outperforms the baseline, which is direct optimization in parameter space using the SGD optimizer where the initialization and learning rates are well tunned. The improvement is more significant for the case with jumps in $f$ (b). The same approximation properties is observed in MNIST classification (c) and (d) (blue for {\color{blue}accuracy}, red for {\color{red}loss}). ($R=20, k_{max}=3000,\varepsilon=0.1$ for 1D regression and $R=10,k_{max}=300,\varepsilon=0.1$ for MNIST classification are fixed.)}
    \label{fig:approximation}
\end{figure}

\begin{figure}[thb!]
    \center
    \subfigure[1D regression]{\includegraphics[width=0.4\textwidth]{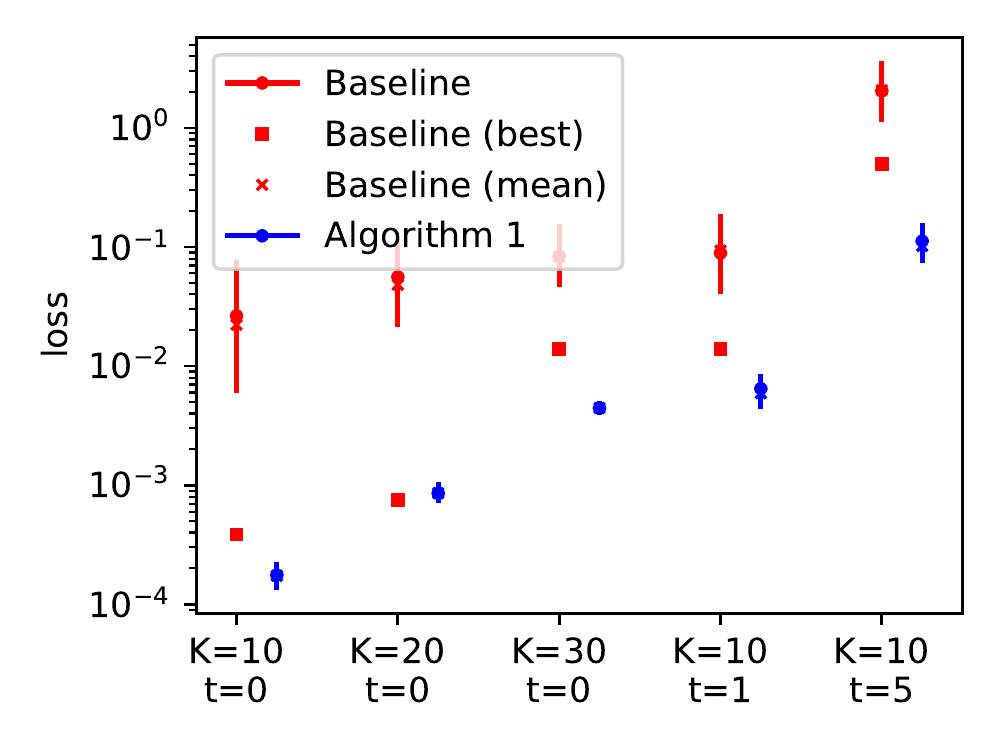} 
    }\\
    \subfigure[MNIST, loss]{\includegraphics[width=0.4\textwidth]{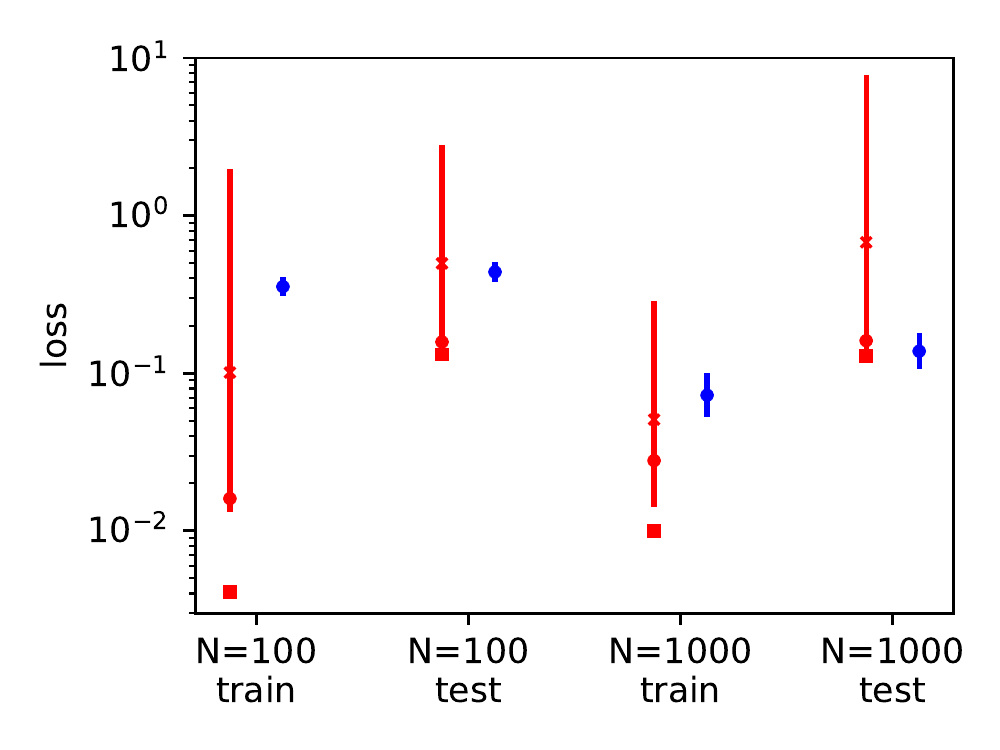}
        }
    \subfigure[MNIST, accuracy]{\includegraphics[width=0.4\textwidth]{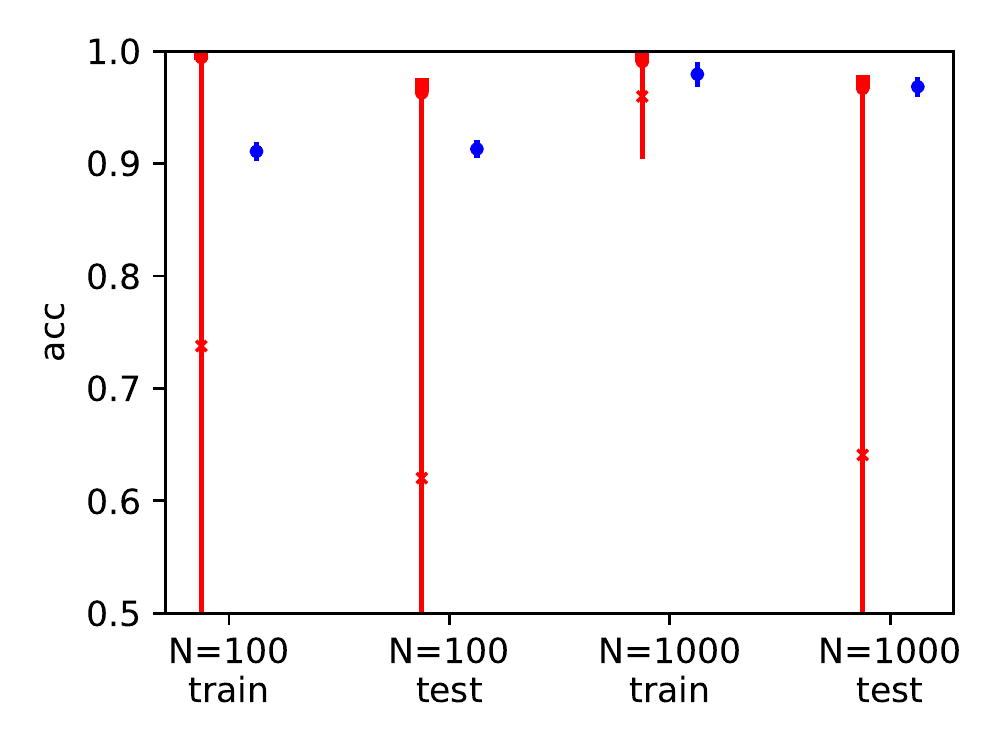}
        }\\
    \subfigure[MNIST, CNN, loss]{\includegraphics[width=0.4\textwidth]{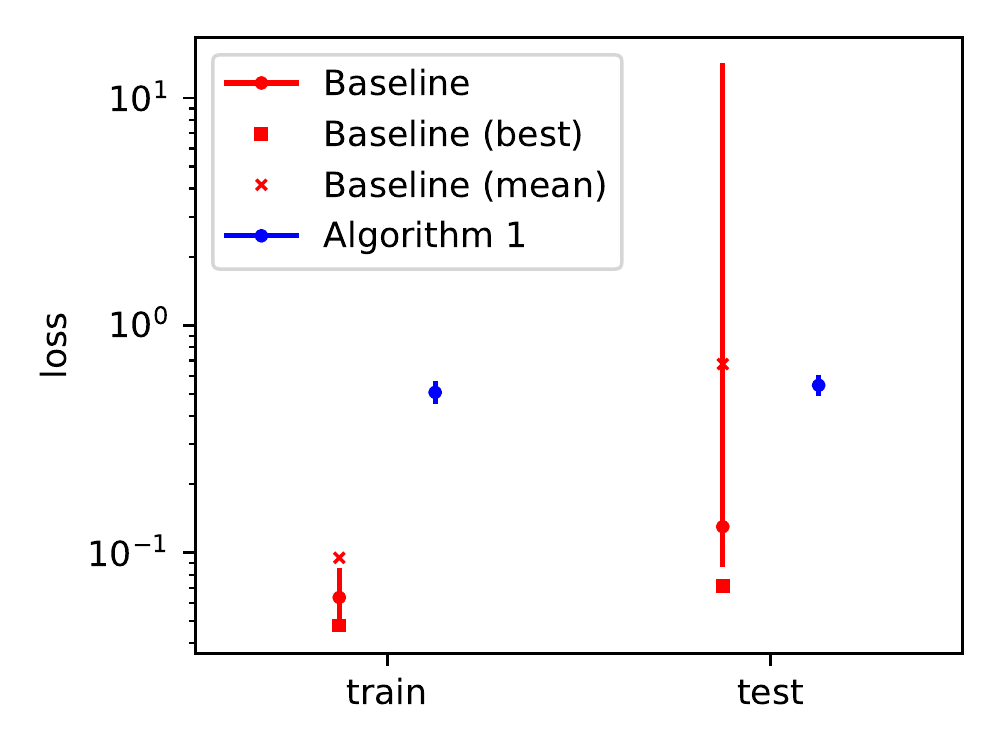}
        }
    \subfigure[MNIST, CNN, accuracy]{\includegraphics[width=0.4\textwidth]{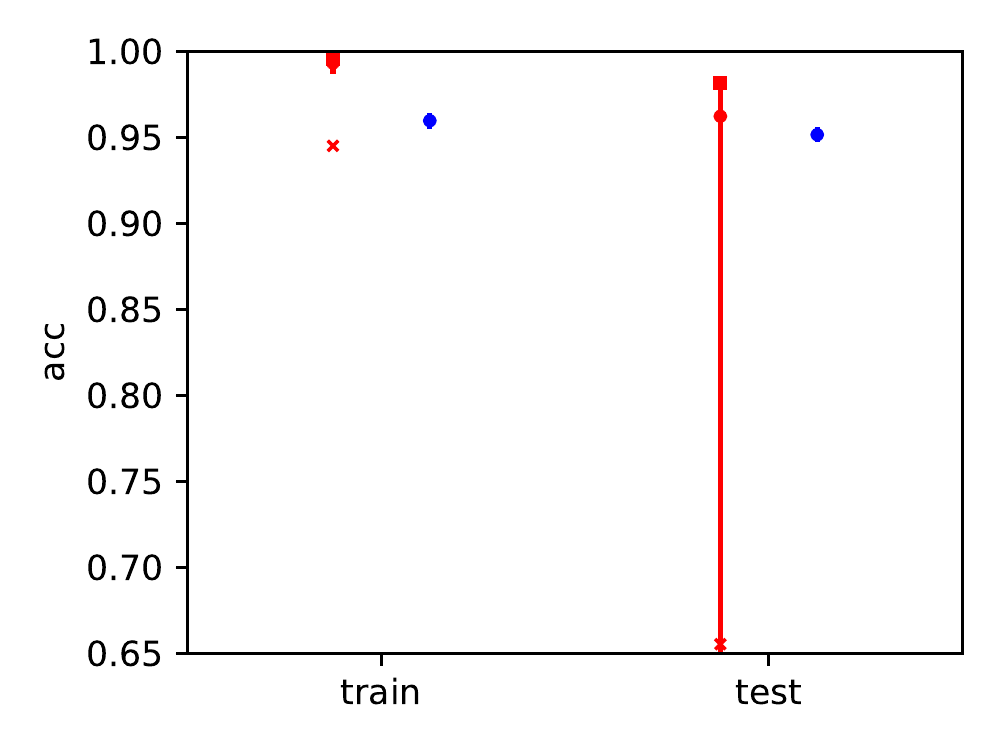}
        }
    \caption{
        Results demonstrating robustness properties.
        In each run, the initial conditions are sampled randomly according to some typical distributions (see Appendix~\ref{sec:appendix_experiments} for details). The interquartile range is shown as error bars. (a) 1D regression. We observe that Alg.~\ref{alg:A} outperforms the SGD baseline, especially for $f$ with large jumps. Furthermore, the distribution-space results are essentially independent of initial condition (error bars show the fluctuation of Monte Carlo integration with $R=20$).
        The same holds for the MNIST experiments (b,c,d,e), except that for small networks $N=100$ (b), the mean (test) performance is worse than direct training. Nevertheless, the robustness of Alg.~\ref{alg:A} is evident.
    }
    \label{fig:robustness}
\end{figure}

\subsection{1D regression problem}

First, we consider a toy example in the form of 1D least-squares regression. Given some target function $f: [-1,1] \rightarrow\R$, we approximate it using a one-hidden-layer neural network with ReLU activations, i.e. $\sum_{j=1}^N a_j \relu( v_j x + b_j)$. Due to positive homogeneity of $\relu$, we may reduce the number parameters by setting $v_j = \cos(\theta_j), b_j = \sin(\theta_j), \theta_j\in [0,2\pi]$, yielding the parameter-space non-convex optimization problem
\begin{align}\label{eq:1d_min_w}
    \begin{split}
        &\min_{w\in W}
        J[u_w]
        \equiv
        \|
        u_w - f
        \|^2_{L^2([-1,1])},
        \\
        &u_w(x) = \sum_{j=1}^N a_j \relu( \cos(\theta_j) x + \sin(\theta_j)), \\
        &w \equiv (a, \theta) \in W \equiv [-A,A]^N \times [0,2\pi]^{N},
    \end{split}
\end{align}
where $A$ is a sufficiently large positive number.

For the distribution-space counterpart, we further simplify~\eqref{eq:1d_min_w} by writing it as $\min_\theta J[u_{\theta}]$ where $u_{\theta} := \argmin_a J[u_{(a,\theta)}]$, eliminating the need for sampling $a$. Hence, we obtain
\begin{align}
    \inf_{\rho \in P}
    F[\rho]
    \equiv
    \left\|
        \int_{\Theta} u_{\theta} \rho(\theta) d\theta
        - f
    \right\|^2_{L^2([-1,1])},
    \qquad
    \Theta := [0,2\pi]^N.
\end{align}

Now, we have to choose a collection of basic distributions $\{\phi_i, i=1,\dots,n\}$. Here, we perform a dimensional reduction technique typically used in density function theory (for example see \citep{Eschrig1996fundamentals}) where we write
\begin{align}\label{eq:nested_inf}
    \inf_\rho F[\rho] = \inf_{\bar{\rho}} \inf_{\rho: \rho\text{'s marginal is } \bar{\rho}} F[\rho],
\end{align}
with $\bar{\rho}$ denoting a distribution over $[0,2\pi]$. Note that equality holds since the neural network is invariant to permutations of its nodes, thus optimality is obtained at a distribution whose marginals are all equal. In addition, the optimization problem in the right hand of \eqref{eq:nested_inf} is still convex (see Appendix~\ref{sec:proof}).
We now assume that the inner minimizer of the right hand side is well-approximated by a product distribution $\bar{\rho}(\theta_1)\times \dots \times \bar{\rho}(\theta_N)$, each of which we then approximate by the mixture form~\eqref{eq:mixture_form} with
\begin{align}
    \phi_i(\theta) =
    \tfrac{n}{2\pi}\varphi
    \left(
        \tfrac{n}{2\pi}\theta - i+1
    \right),
    \qquad
    i=1,...,n,
\end{align}
where $\varphi$ is the triangle function $(1-|x|)\chi_{[-1,1]}(x)$. Optimization proceeds as in Alg.~\ref{alg:A}. Notice that strictly speaking, after the dimensional reduction the problem in $\alpha$ need not remain convex, but now the distribution space is only over 1D distributions, which greatly simplifies computations.

We considered the regression target function $f$ represented as a Fourier series with random coefficients and truncated at $K$ terms. Moreover, we introduce two jumps located at $x=-0.4,0.6$, whose scales are controlled by a parameter $t \in \mathbb{R}$,
\begin{align}
    f(x) = \sum_{k=0}^{K-1} \frac{1}{1+k}\big(
        c_k \cos(2k\pi x) + s_k \sin(2k\pi x) \big)
        + (\text{sign}(x-0.6) - \text{sign}(x+0.4)) t,
\end{align}
where the coefficients $c_k,s_k$ are independently sampled from the standard normal distribution. The graph of $f$ are found in the top insets of Fig.~\ref{fig:approximation}(a,b). In the following experiments, $K,t$ will be varied, noting that large $t$ increases non-smoothness and large $K$ introduces higher frequencies.
The training data contains 3000 samples uniformly distributed $[-1,1]$, and their labels set to the value of $f$ at these points. Note that for each sample $\theta \in \Theta$, $a\in \R^N$ is computed by solving the least-squares problem $\min_a J[u_{(a,\theta)}]$.

Fig.~\ref{fig:approximation}(a-b) validates the approximation property: as the number of mixture component $n$ increases, the error becomes smaller, and the performance of the distribution-space algorithm exceeds that of direct optimization, especially so for target functions which have large jumps. In this case, direct optimization is difficult as a greater resolution in $\Theta$ space is required, corresponding to points where the change in $f$ or its derivatives are significant, e.g., at jump discontinuities. See Appendix~\ref{sec:appendix_experiments} for some discussion on this point.

Fig.~\ref{fig:robustness}(a) demonstrates the robustness of the distribution space algorithm compared to the baseline algorithm (SGD in parameter space) for a variety of initializations and learning rates. As expected, the baseline parameter-space results are sensitive to initializations, whereas Alg.~\ref{alg:A} is practically independent of initial conditions. Furthermore, as expected, its improvement versus the best baseline results is more pronounced when the jumps in $f$ are large (see Fig.~\ref{fig:approximation}(b)).

\subsection{MNIST classification with one-hidden layer neural networks}

We now test the distribution-space algorithm on the slightly larger MNIST classification problem~\citep{lecun1998mnist} with softmax cross-entropy loss. We consider a one-hidden-layer neural network again, but now with cosine activations. This is inspired by the random feature model~\citep{Rahimi2008Random}, but in our case, instead of using a fixed distribution, we train the distribution governing the feature map parameters. Concretely, we consider the model
\begin{align}
    u_w(x) = \sum_{j=1}^{N} a_j \cos(v_j^T x + b_j) + c,
    \qquad
    a_j,c \in \R^{10}, v_j \in \R^{784}, b_j\in\R,
\end{align}
and as before, we alleviate the need to sample $(a,c)$ by assuming that $(a,c)$, given $(v,b)$ and the data, is chosen to minimize the loss. For approximating the distribution over $v$ and $b$, we choose the basic distribution family
\begin{align}
    \phi_i(v, b)
    \propto
    \exp{
        (
            -\tfrac{\lambda_i^2}{2}
            \| v \|^2
        )
        \chi_{[0,2\pi]^N}(b)
    }
\end{align}
where $\lambda_i$'s are picked between 1 and 20. That is, the basic distributions are standard normals on $v$ and uniform on $b$. This choice is motivated by the random feature approach~\citep{Rahimi2008Random}. However, instead of choosing $\lambda_i$ as a hyper-parameter, here we optimize over mixture distributions with different $\lambda_i$'s. As before, given a sample $(v,b)$, the vector $a$ and $c$ are determined by minimizing $J[u_{(a,c,v,b)}]$. This is no longer a least-squares problem, hence we solve it approximately by applying 2 epochs of Adam \citep{Kingma2014Adam} on $(a,c)$. Note that unlike the previous example, here the current problem is always convex in the mixture coefficients since we did not perform the dimensional reduction technique.

Fig.~\ref{fig:approximation}(c) again validates the fact that as $n$ increases, the approximation improves. However, as our collection $\{\phi_i\}$ is rather constrained, in the small network considered here ($N=100$), the performance is worse than the parameter-space baseline, which has around 97\% accuracy. Fig.~\ref{fig:robustness}(b-c) demonstrates the robustness of Alg.~\ref{alg:A} -- we can see that this advantage is more pronounced than the 1D example, which is to be expected since the higher dimensional problem typically has a more complex optimization landscape. However, we note that for this specific application and network architecture, it is not hard to come up with a good initialization scheme for the parameter-space optimization. Nevertheless, Fig.~\ref{fig:robustness}(b-c) demonstrates that arbitrary initialization can lead to very poor performance, which is not the case for the distribution-space algorithm. Finally, notice in Fig.~\ref{fig:robustness}(b-c) that for larger networks ($N=1000$), the small mixture basis here appears to be enough to rival the performance of optimization in parameter-space. This suggests that for very large networks, a simple representation in the distribution space over trainable parameters may be obtained, which is somewhat consistent with recent observations in~\citep{Martin2018Implicit,martin2019traditional}.

\subsection{MNIST classification with multi-layer neural networks}

As further validation, we consider the MNIST classification problems using a multi-layer convolutional neural network, which contains two convolutional layers with leaky-ReLU activation followed by a fully connected layer with softmax activation (the full specification is displayed in Appendix~\ref{sec:mnist_cnn}). The weights $w$ of this network contain two parts, $w_{\text{CNN}}$ and $w_{FC}$, which are the weights in the convolutional layers and the fully connected layer, respectively. Unlike networks with only one hidden layer, multi-layer networks need more sophisticated means to choose the basis distributions $\{\phi_i(w)\}$.
The main reason is that the weights in different layers are dependent on one another, and a na\"{i}ve parameterization treating them as equivalent is not sufficient.
Instead, we construct $\{\phi_i(w)\}$ as follows,
\begin{align}
    \phi_i(w) \equiv \phi_i(w_{\text{CNN}}, w_{FC})
    \propto
    \tilde \phi_i(w_{\text{CNN}}) P(w_{FC} | w_{\text{CNN}};\sigma,\text{data}),
\end{align}
where $\tilde \phi_i(w_{\text{CNN}})$ is the density function for $w_{\text{CNN}}$, and
$P(w_{FC} | w_{\text{CNN}})$ is the conditional density for $w_{FC}$ with given $w_{\text{CNN}}$. Particularly, for any fixed $w_{\text{CNN}}$, the $w_{FC}$ is initialized (element-wise) by i.i.d. Gaussian $\mathcal{N}(0,\sigma^2)$ with fixed $\sigma$, and then updated by one epoch Adam-optimizer with learning rate $10^{-3}$ on 5000 MNIST samples (about 10\% of the MNIST dataset). All that remains now is constructing the density $\tilde \phi_i(w_{\text{CNN}})$, taking into account the dependence across different layers.

A popular way to parameterize high dimensional distributions is using the \emph{reparameterization trick}: instead of directly parameterizing $\tilde{\phi}_i$ itself, we model samples from $\tilde{\phi}_i$ as i.i.d. random Gaussian noise transformed by a nonlinear function parameterized by a neural network. This approach has been used in VAEs~\citep{Kingma2013Auto} and GANs~\citep{Goodfellow2014Generative} to approximate complex distributions. In the specific case of CNNs, there is a line of works on generating weights for a CNN using another network, known as hyper-networks \citep{Ha2016Hypernetworks, Krueger2017Bayesian, Deutsch2018Generating, Ratzlaff2019HyperGAN}, which automatically takes care of dependence across layers using the reparameterization trick. Here, we adopt the hyper-network architecture in \citep{Deutsch2018Generating} to model $\tilde \phi_i(w_{\text{CNN}})$. For appropriate comparison, we do not train the hyper-networks on the MNIST dataset itself. Instead, we aim to model some generic feature extractor distributions on deep CNN weights, and we accomplish this by borrowing ideas from transfer learning: we train the hyper-network on the fashion-MNIST dataset~\citep{Xiao2017Fashion} with a variety of hyper-parameters, which produces a collection of $\{ \tilde{\phi}_i \}$ that are then used as basis distributions for the MNIST task.
In particular, the hyper-network is trained by minimizing a loss function $\tilde L(\Phi)$ which is a compromise between accuracy and diversity (quantified by the negative of the entropy of the outputs),
where a hyper-parameter $\lambda$ is used to balance the accuracy loss and diversity loss. Choosing different values of $\lambda$ will give different distributions on $w_{\text{CNN}}$. In addition, the hyper-network at different training steps are corresponding to different distributions, hence we can enumerate the distributions $\{\tilde \phi_i(w_{\text{CNN}})\}$ as a collection of hyper-networks with different $\lambda$ and different training steps $k_{\text{hyper}}$. Particularly, we set $\lambda \in [10^2,10^3]$ and $k_{\text{hyper}} \in \{5000,10000,20000\}$. The full specification can be found in Appendix~\ref{sec:mnist_cnn}.
%
Fig.~\ref{fig:approximation}(d) again validates the fact that as $n$ increases, the approximation improves, and Fig.~\ref{fig:robustness}(d-e) demonstrates the robustness of Alg.~\ref{alg:A}.

Let us emphasize an important point these experiments demonstrate about the distribution-space approach. First, it is clear that in high dimensions, it is not feasible to cover all interesting target functions with a fixed, generic mixture family, due to the curse of dimensionality. 
Consequently, the performance of the approach depends on a good choice of $\{ \phi_i \}$, and this should be adapted to the problem. In the preceding examples, we showed that with an appropriate choice of $\{ \phi_i \}$, $n$ need not be very large in order to attain good performance. This leads to an interesting direction worthy of future exploration, namely the relationship between model architectures ($\Ucal_w$) and the ease of optimization in distribution space. The previous findings appear to suggest that large, over-parameterized models may possess this property.

\section{Conclusion}
\label{sec:conclusion}

In this paper, we discuss an alternative formulation of non-convex optimization problems in machine learning as convex problems in the space of probability distributions over the parameters. Through mixture distribution approximation, we developed algorithms akin to ``subspace methods'' to perform optimization directly in distribution space. We showed using simple examples that due to their convex or low dimensional nature, such distribution-space methods have advantageous properties such as insensitivity to initialization. Moreover, well-known dimensional reduction techniques in distribution space can be applied. Overall, this paves the way for an alternative theoretical and algorithmic approach towards large-scale optimization in machine learning.


\bibliography{rfs}

\begin{thebibliography}{51}
\providecommand{\natexlab}[1]{#1}
\providecommand{\url}[1]{\texttt{#1}}
\expandafter\ifx\csname urlstyle\endcsname\relax
  \providecommand{\doi}[1]{doi: #1}\else
  \providecommand{\doi}{doi: \begingroup \urlstyle{rm}\Url}\fi

\bibitem[Allen-Zhu et~al.(2018)Allen-Zhu, Li, and Song]{allen2018convergence}
Zeyuan Allen-Zhu, Yuanzhi Li, and Zhao Song.
\newblock A convergence theory for deep learning via over-parameterization.
\newblock \emph{arXiv preprint arXiv:1811.03962}, 2018.

\bibitem[Arora et~al.(2018)Arora, Cohen, and Hazan]{arora2018optimization}
Sanjeev Arora, Nadav Cohen, and Elad Hazan.
\newblock On the optimization of deep networks: Implicit acceleration by
  overparameterization.
\newblock \emph{arXiv preprint arXiv:1802.06509}, 2018.

\bibitem[Bacharoglou(2010)]{Bacharoglou2010Approximation}
Athanassia Bacharoglou.
\newblock Approximation of probability distributions by convex mixtures of
  gaussian measures.
\newblock \emph{Proceedings of the American Mathematical Society}, 138\penalty0
  (7):\penalty0 2619--2628, 2010.

\bibitem[Barron(1993)]{Barron1993Universal}
Andrew~R Barron.
\newblock Universal approximation bounds for superpositions of a sigmoidal
  function.
\newblock \emph{IEEE Transactions on Information theory}, 39\penalty0
  (3):\penalty0 930--945, 1993.

\bibitem[Bassily et~al.(2018)Bassily, Belkin, and Ma]{bassily2018exponential}
Raef Bassily, Mikhail Belkin, and Siyuan Ma.
\newblock On exponential convergence of sgd in non-convex over-parametrized
  learning.
\newblock \emph{arXiv preprint arXiv:1811.02564}, 2018.

\bibitem[Bengio et~al.(2006)Bengio, Roux, Vincent, Delalleau, and
  Marcotte]{Bengio2006Convex}
Yoshua Bengio, Nicolas~L Roux, Pascal Vincent, Olivier Delalleau, and Patrice
  Marcotte.
\newblock Convex neural networks.
\newblock In \emph{Advances in neural information processing systems}, pages
  123--130, 2006.

\bibitem[Breiman(1996)]{Breiman1996Bagging}
Leo Breiman.
\newblock Bagging predictors.
\newblock \emph{Machine learning}, 24\penalty0 (2):\penalty0 123--140, 1996.

\bibitem[Chizat and Bach(2018)]{Chizat2018Global}
Lenaic Chizat and Francis Bach.
\newblock On the global convergence of gradient descent for over-parameterized
  models using optimal transport.
\newblock In \emph{Advances in Neural Information Processing Systems (NIPS)},
  2018.

\bibitem[Cybenko(1989)]{Cybenko1989Approximation}
George Cybenko.
\newblock Approximation by superpositions of a sigmoidal function.
\newblock \emph{Mathematics of control, signals and systems}, 2\penalty0
  (4):\penalty0 303--314, 1989.

\bibitem[Dauphin et~al.(2014)Dauphin, Pascanu, Gulcehre, Cho, Ganguli, and
  Bengio]{dauphin2014identifying}
Yann~N Dauphin, Razvan Pascanu, Caglar Gulcehre, Kyunghyun Cho, Surya Ganguli,
  and Yoshua Bengio.
\newblock Identifying and attacking the saddle point problem in
  high-dimensional non-convex optimization.
\newblock In \emph{Advances in neural information processing systems}, pages
  2933--2941, 2014.

\bibitem[Dean(1996)]{Dean1996Langevin}
David~S Dean.
\newblock Langevin equation for the density of a system of interacting langevin
  processes.
\newblock \emph{Journal of Physics A: Mathematical and General}, 29\penalty0
  (24):\penalty0 L613, 1996.

\bibitem[Deutsch(2018)]{Deutsch2018Generating}
Lior Deutsch.
\newblock Generating neural networks with neural networks.
\newblock \emph{arXiv preprint arXiv:1801.01952}, 2018.

\bibitem[Du et~al.(2018)Du, Zhai, Poczos, and Singh]{Du2018Gradient}
Simon~S Du, Xiyu Zhai, Barnabas Poczos, and Aarti Singh.
\newblock Gradient descent provably optimizes over-parameterized neural
  networks.
\newblock \emph{arXiv preprint arXiv:1810.02054}, 2018.

\bibitem[Duchi et~al.(2008)Duchi, Shalev-Shwartz, Singer, and
  Chandra]{Duchi2008Efficient}
John Duchi, Shai Shalev-Shwartz, Yoram Singer, and Tushar Chandra.
\newblock Efficient projections onto the l1-ball for learning in high
  dimensions.
\newblock In \emph{Proceedings of the 25th International Conference on Machine
  Learning}, ICML '08, pages 272--279, New York, NY, USA, 2008. ACM.
\newblock ISBN 978-1-60558-205-4.

\bibitem[Eschrig(1996)]{Eschrig1996fundamentals}
Helmut Eschrig.
\newblock \emph{The fundamentals of density functional theory}, volume~32.
\newblock Springer, 1996.

\bibitem[Freund et~al.(1996)Freund, Schapire, et~al.]{Freund1996Experiments}
Yoav Freund, Robert~E Schapire, et~al.
\newblock Experiments with a new boosting algorithm.
\newblock Citeseer, 1996.

\bibitem[Glorot and Bengio(2010)]{Glorot2010Understanding}
Xavier Glorot and Yoshua Bengio.
\newblock Understanding the difficulty of training deep feedforward neural
  networks.
\newblock In \emph{Proceedings of the thirteenth international conference on
  artificial intelligence and statistics}, pages 249--256, 2010.

\bibitem[Goodfellow et~al.(2014)Goodfellow, Pouget-Abadie, Mirza, Xu,
  Warde-Farley, Ozair, Courville, and Bengio]{Goodfellow2014Generative}
Ian Goodfellow, Jean Pouget-Abadie, Mehdi Mirza, Bing Xu, David Warde-Farley,
  Sherjil Ozair, Aaron Courville, and Yoshua Bengio.
\newblock Generative adversarial nets.
\newblock In \emph{Advances in neural information processing systems}, pages
  2672--2680, 2014.

\bibitem[Ha et~al.(2016)Ha, Dai, and Le]{Ha2016Hypernetworks}
David Ha, Andrew Dai, and Quoc~V Le.
\newblock Hypernetworks.
\newblock \emph{arXiv preprint arXiv:1609.09106}, 2016.

\bibitem[Hornik et~al.(1989)Hornik, Stinchcombe, and
  White]{Hornik1989Multilayer}
Kurt Hornik, Maxwell Stinchcombe, and Halbert White.
\newblock Multilayer feedforward networks are universal approximators.
\newblock \emph{Neural networks}, 2\penalty0 (5):\penalty0 359--366, 1989.

\bibitem[Jain et~al.(2017)Jain, Kar, et~al.]{Jain2017Non}
Prateek Jain, Purushottam Kar, et~al.
\newblock Non-convex optimization for machine learning.
\newblock \emph{Foundations and Trends{\textregistered} in Machine Learning},
  10\penalty0 (3-4):\penalty0 142--336, 2017.

\bibitem[Kingma and Ba(2014)]{Kingma2014Adam}
Diederik~P Kingma and Jimmy Ba.
\newblock Adam: A method for stochastic optimization.
\newblock \emph{arXiv preprint arXiv:1412.6980}, 2014.

\bibitem[Kingma and Welling(2013)]{Kingma2013Auto}
Diederik~P Kingma and Max Welling.
\newblock Auto-encoding variational bayes.
\newblock \emph{arXiv preprint arXiv:1312.6114}, 2013.

\bibitem[Krueger et~al.(2017)Krueger, Huang, Islam, Turner, Lacoste, and
  Courville]{Krueger2017Bayesian}
David Krueger, Chin-Wei Huang, Riashat Islam, Ryan Turner, Alexandre Lacoste,
  and Aaron Courville.
\newblock Bayesian hypernetworks.
\newblock \emph{arXiv preprint arXiv:1710.04759}, 2017.

\bibitem[LeCun(1998)]{lecun1998mnist}
Yann LeCun.
\newblock The mnist database of handwritten digits.
\newblock \emph{http://yann. lecun. com/exdb/mnist/}, 1998.

\bibitem[Leshno et~al.(1993)Leshno, Lin, Pinkus, and
  Schocken]{Leshno1993Multilayer}
Moshe Leshno, Vladimir~Ya Lin, Allan Pinkus, and Shimon Schocken.
\newblock Multilayer feedforward networks with a nonpolynomial activation
  function can approximate any function.
\newblock \emph{Neural networks}, 6\penalty0 (6):\penalty0 861--867, 1993.

\bibitem[Li and Liang(2018)]{Li2018Learning}
Yuanzhi Li and Yingyu Liang.
\newblock Learning overparameterized neural networks via stochastic gradient
  descent on structured data.
\newblock In \emph{Advances in Neural Information Processing Systems}, pages
  8157--8166, 2018.

\bibitem[Ma et~al.(2019)Ma, Wu, et~al.]{Ma2019Barron}
Chao Ma, Lei Wu, et~al.
\newblock Barron spaces and the compositional function spaces for neural
  network models.
\newblock \emph{arXiv preprint arXiv:1906.08039}, 2019.

\bibitem[Ma et~al.(2017)Ma, Bassily, and Belkin]{ma2017power}
Siyuan Ma, Raef Bassily, and Mikhail Belkin.
\newblock The power of interpolation: Understanding the effectiveness of sgd in
  modern over-parametrized learning.
\newblock \emph{arXiv preprint arXiv:1712.06559}, 2017.

\bibitem[Martin and Mahoney(2018)]{Martin2018Implicit}
Charles~H Martin and Michael~W Mahoney.
\newblock Implicit self-regularization in deep neural networks: Evidence from
  random matrix theory and implications for learning.
\newblock \emph{arXiv preprint arXiv:1810.01075}, 2018.

\bibitem[Martin and Mahoney(2019)]{martin2019traditional}
Charles~H Martin and Michael~W Mahoney.
\newblock Traditional and heavy-tailed self regularization in neural network
  models.
\newblock \emph{arXiv preprint arXiv:1901.08276}, 2019.

\bibitem[Mei et~al.(2018)Mei, Montanari, and Nguyen]{Mei2018mean}
Song Mei, Andrea Montanari, and Phan-Minh Nguyen.
\newblock A mean field view of the landscape of two-layer neural networks.
\newblock \emph{Proceedings of the National Academy of Sciences}, 115\penalty0
  (33):\penalty0 E7665--E7671, 2018.

\bibitem[Nestoridis et~al.(2011)Nestoridis, Schmutzhard, and
  Stefanopoulos]{Nestoridis2011Universal}
Vassili Nestoridis, Sebastian Schmutzhard, and Vangelis Stefanopoulos.
\newblock Universal series induced by approximate identities and some relevant
  applications.
\newblock \emph{Journal of Approximation Theory}, 163\penalty0 (12):\penalty0
  1783 -- 1797, 2011.
\newblock ISSN 0021-9045.

\bibitem[Oymak and Soltanolkotabi(2018)]{oymak2018overparameterized}
Samet Oymak and Mahdi Soltanolkotabi.
\newblock Overparameterized nonlinear learning: Gradient descent takes the
  shortest path?
\newblock \emph{arXiv preprint arXiv:1812.10004}, 2018.

\bibitem[Rahimi and Recht(2008)]{Rahimi2008Random}
Ali Rahimi and Benjamin Recht.
\newblock Random features for large-scale kernel machines.
\newblock In \emph{Advances in neural information processing systems}, pages
  1177--1184, 2008.

\bibitem[Rahimi and Recht(2009)]{Rahimi2009Weighted}
Ali Rahimi and Benjamin Recht.
\newblock Weighted sums of random kitchen sinks: Replacing minimization with
  randomization in learning.
\newblock In \emph{Advances in neural information processing systems}, pages
  1313--1320, 2009.

\bibitem[Ratzlaff and Fuxin(2019)]{Ratzlaff2019HyperGAN}
Neale Ratzlaff and Li~Fuxin.
\newblock Hypergan: A generative model for diverse, performant neural networks.
\newblock \emph{arXiv preprint arXiv:1901.11058}, 2019.

\bibitem[Rokach(2010)]{Rokach2010Ensemble}
Lior Rokach.
\newblock Ensemble-based classifiers.
\newblock \emph{Artificial Intelligence Review}, 33\penalty0 (1-2):\penalty0
  1--39, 2010.

\bibitem[Rotskoff and Vanden-Eijnden(2018)]{Rotskoff2018Parameters}
Grant Rotskoff and Eric Vanden-Eijnden.
\newblock Parameters as interacting particles: long time convergence and
  asymptotic error scaling of neural networks.
\newblock In \emph{Advances in neural information processing systems}, pages
  7146--7155, 2018.

\bibitem[Sinha and Duchi(2016)]{Sinha2016Learning}
Aman Sinha and John~C Duchi.
\newblock Learning kernels with random features.
\newblock In \emph{Advances In Neural Information Processing Systems}, pages
  1298--1306, 2016.

\bibitem[Sirignano and Spiliopoulos(2018)]{Sirignano2018Mean}
Justin Sirignano and Konstantinos Spiliopoulos.
\newblock Mean field analysis of neural networks.
\newblock \emph{arXiv preprint arXiv:1805.01053}, 2018.

\bibitem[Sorenson and Alspach(1971)]{Sorenson1971Recursive}
Harold~W Sorenson and Daniel~L Alspach.
\newblock Recursive bayesian estimation using gaussian sums.
\newblock \emph{Automatica}, 7\penalty0 (4):\penalty0 465--479, 1971.

\bibitem[Srivastava et~al.(2014)Srivastava, Hinton, Krizhevsky, Sutskever, and
  Salakhutdinov]{Srivastava2014Dropout}
Nitish Srivastava, Geoffrey Hinton, Alex Krizhevsky, Ilya Sutskever, and Ruslan
  Salakhutdinov.
\newblock Dropout: A simple way to prevent neural networks from overfitting.
\newblock \emph{Journal of Machine Learning Research}, 15:\penalty0 1929--1958,
  2014.

\bibitem[Wan et~al.(2013)Wan, Zeiler, Zhang, Cun, and
  Fergus]{Wan2013Regularization}
Li~Wan, Matthew Zeiler, Sixin Zhang, Yann~Le Cun, and Rob Fergus.
\newblock Regularization of neural networks using dropconnect.
\newblock In Sanjoy Dasgupta and David McAllester, editors, \emph{Proceedings
  of the 30th International Conference on Machine Learning}, volume~28 of
  \emph{Proceedings of Machine Learning Research}, pages 1058--1066, Atlanta,
  Georgia, USA, 17--19 Jun 2013. PMLR.

\bibitem[Wang and Carreira-Perpin{\'a}n(2013)]{Wang2013Projection}
Weiran Wang and Miguel~A Carreira-Perpin{\'a}n.
\newblock Projection onto the probability simplex: An efficient algorithm with
  a simple proof, and an application.
\newblock \emph{arXiv preprint arXiv:1309.1541}, 2013.

\bibitem[Wei et~al.(2018)Wei, Lee, Liu, and Ma]{Wei2018margin}
Colin Wei, Jason~D Lee, Qiang Liu, and Tengyu Ma.
\newblock On the margin theory of feedforward neural networks.
\newblock \emph{arXiv preprint arXiv:1810.05369}, 2018.

\bibitem[Wu et~al.(2017)Wu, Zhu, et~al.]{wu2017towards}
Lei Wu, Zhanxing Zhu, et~al.
\newblock Towards understanding generalization of deep learning: Perspective of
  loss landscapes.
\newblock \emph{arXiv preprint arXiv:1706.10239}, 2017.

\bibitem[Xiao et~al.(2017)Xiao, Rasul, and Vollgraf]{Xiao2017Fashion}
Han Xiao, Kashif Rasul, and Roland Vollgraf.
\newblock Fashion-mnist: a novel image dataset for benchmarking machine
  learning algorithms, 2017.

\bibitem[Xiao et~al.(2018)Xiao, Bahri, Sohl-Dickstein, Schoenholz, and
  Pennington]{Xiao2018Dynamical}
Lechao Xiao, Yasaman Bahri, Jascha Sohl-Dickstein, Samuel~S Schoenholz, and
  Jeffrey Pennington.
\newblock Dynamical isometry and a mean field theory of cnns: How to train
  10,000-layer vanilla convolutional neural networks.
\newblock \emph{arXiv preprint arXiv:1806.05393}, 2018.

\bibitem[Zeevi and Meir(1997)]{Zeevi1997Density}
Assaf~J Zeevi and Ronny Meir.
\newblock Density estimation through convex combinations of densities:
  approximation and estimation bounds.
\newblock \emph{Neural Networks}, 10\penalty0 (1):\penalty0 99--109, 1997.

\bibitem[Zhang et~al.(2016)Zhang, Bengio, Hardt, Recht, and
  Vinyals]{Zhang2016Understanding}
Chiyuan Zhang, Samy Bengio, Moritz Hardt, Benjamin Recht, and Oriol Vinyals.
\newblock Understanding deep learning requires rethinking generalization.
\newblock \emph{arXiv preprint arXiv:1611.03530}, 2016.

\end{thebibliography}

\newpage
\appendix

\section{Proof of results}
\label{sec:proof}

In this section, we give the proofs of the claims in the main text.

\subsection{Continuous density function approximation}

\begin{proposition}[Revised Prop. \ref{prop:inf_relation}]
    Suppose that $(x,w) \mapsto u_w(x)$ is
    continuous for $x\in \Omega$, $\gamma$-H\"{o}lder continuous for $w\in W$,
    and $J[\cdot]$ is a convex, $\gamma$-H\"{o}lder continuous functional on $\mathcal{L}^2(\Omega)$, then the following relations  hold:
\begin{align}
    \inf_{\rho \in P} F[\rho]
    =\inf_{\mu \in \mathcal{M}} F[\mu] \le \inf_{w \in W} L(w).
\end{align}
\end{proposition}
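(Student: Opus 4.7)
The plan is to handle the two claims separately. The bound $\inf_{\mu \in \mathcal{M}} F[\mu] \le \inf_{w \in W} L(w)$ reduces to a one-line observation using Dirac masses, whereas the equality $\inf_{\rho \in P} F[\rho] = \inf_{\mu \in \mathcal{M}} F[\mu]$ requires a quantitative approximation of general probability measures by uniformly continuous densities.

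For the inequality, note that for every $w \in W$ the Dirac measure $\delta_w$ lies in $\mathcal{M}$ and satisfies $\int_W u_{w'}\, d\delta_w(w') = u_w$, hence $F[\delta_w] = J[u_w] = L(w)$. Taking the infimum over $w$ on both sides and using $\delta_w \in \mathcal{M}$ on the left gives the claim.

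For the equality, since every $\rho \in P$ determines an element of $\mathcal{M}$, the bound $\inf_{\rho \in P} F[\rho] \ge \inf_{\mu \in \mathcal{M}} F[\mu]$ is automatic. The substantive direction is to produce, for every $\mu \in \mathcal{M}$ and $\eta > 0$, some $\rho \in P$ with $F[\rho] \le F[\mu] + \eta$. My plan is mollification: fix a uniformly continuous approximate identity $\{\varphi_\sigma\}_{\sigma>0}$ on $\R^d$ (e.g.\ scaled triangular hat or truncated Gaussian), and set $\rho_\sigma := \mu * \varphi_\sigma$. Then $\rho_\sigma$ is non-negative, integrates to one, and is uniformly continuous since it is the convolution of a finite measure with a uniformly continuous kernel; hence $\rho_\sigma \in P$ (up to a support subtlety addressed below). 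Applying Fubini I would write
\begin{align*}
\int u_w\, \rho_\sigma(w)\, dw
= \int_W \left( \int_{\R^d} u_{w'+z}\, \varphi_\sigma(z)\, dz \right) d\mu(w').
\end{align*}
The $\gamma$-Hölder assumption on $w \mapsto u_w$ yields $\| u_{w'+z} - u_{w'} \|_{L^2(\Omega)} \le C \|z\|^\gamma$, so the inner integral differs from $u_{w'}$ in $L^2(\Omega)$ by at most $C \int \|z\|^\gamma \varphi_\sigma(z)\, dz = O(\sigma^\gamma)$, uniformly in $w'$. Integrating against $\mu$ and applying $\gamma$-Hölder continuity of $J$ on $L^2(\Omega)$ then gives $|F[\rho_\sigma] - F[\mu]| = O(\sigma^{\gamma^2}) \to 0$ as $\sigma \to 0$, which closes the argument.

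The main obstacle is the purely technical mismatch between the support of $\rho_\sigma$, which generally spills slightly outside $W$ by a band of width $O(\sigma)$, and the fact that $\mathcal{M}$ and $P$ are defined on the compact set $W$ itself. I would address this by extending $w \mapsto u_w$ to a $\gamma$-Hölder continuous map on a slightly enlarged compact set $W' \supset W$ (possible via a McShane--Whitney-type extension, since $u$ takes values in the Banach space $C^0(\Omega)$), and then either restricting $\rho_\sigma$ to $W$ and renormalising (the mass discarded is $O(\sigma)$ and contributes a negligible error once pushed through the Hölder estimate for $J$) or working with $\rho_\sigma$ on $W'$ throughout. Apart from this bookkeeping, the chain of Hölder inequalities is routine; the fast decay of $\varphi$ ensures $\int \|z\|^\gamma \varphi_\sigma(z)\, dz$ is controlled uniformly in $\sigma$, so no further difficulty is expected.
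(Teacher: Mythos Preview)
Your proposal is correct and follows essentially the same approach as the paper: the inequality via Dirac masses (the paper attributes it to Jensen, but the content is the same), and the equality via mollification $\rho_\sigma = \mu * \varphi_\sigma$ with a H\"older estimate on $w \mapsto u_w$ followed by the H\"older continuity of $J$. Your treatment is in fact slightly more careful than the paper's, which silently ignores the support spillover issue you flag.
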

\begin{proof}
    The inequality part is a direct consequence of Jensen inequality. For the equality part, it is obvious that $\inf_{\rho \in P} F[\rho] \ge \inf_{\mu \in \mathcal{M}} F[\mu]$, hence we only need to prove the following claim: for any $\mu\in\mathcal{M}$ and any $\varepsilon>0$, there exist $\rho\in P$ such that $F[\rho] \le F[\mu] + \varepsilon$.

    We will prove the claim. Note that, for any $\mu \in \mathcal{M}$, the function $h(x) := \int_W u(x,w) d\mu(w)$ is Lebesgue measurable and $h(x) \in \mathcal{L}^2(\Omega)$.

    Since $J[\cdot]$ is $\gamma$-H\"{o}lder continuous, for any $\varepsilon$, there is a a constant $C_\gamma>0$, such that
\begin{align}\label{eq:JJ_ineq}
    | J[g]-J[h] | \le C_\gamma \|g-h\|^\gamma \le \varepsilon,
\end{align}
for any $g(x)\in \mathcal{L}^2(\Omega)$. Therefore, we only need to prove the existence of $g(x) := \int_W u(x,w) \rho(w) dw$ with $\rho(w)\in P$ such that $\|g(x)-h(x)\|\le \delta := (\varepsilon/C_\gamma)^{1/\gamma}$.

    In fact, consider $\varphi$ as the Gaussian function, then $\rho_\sigma(w) := \int \varphi_\sigma(w-v)d\mu(v) \in P$ is a continuous density function for any $\sigma>0$.
    Let $g_\sigma(x) := \int_W u(x,w) \rho_\sigma(w) dw $, then we have
\begin{align}
    |g_\sigma(x) - h(x)| &=
    \Big|\int_W u(x,w) \rho_\sigma(w) dw - \int_W u(x,w) d\mu(w) \Big|\\
    &= \Big| \int_W \int_W (u(x,w)-u(x;v)) \varphi_\sigma(v-w) dv d\mu(w) \Big| \\
    &\le \Big| \int_W \int_W C_{w,\gamma} \|w-v\|^{\gamma} \varphi_\sigma(v-w) dv d\mu(w) \Big| \\
    &\le
    C_{w,\gamma} C_{\varphi,\gamma} \sigma^\gamma =: C \sigma^\gamma,
\end{align}
where $C_{w,\gamma}$ is the constant in the $\gamma$-H\"{o}lder condition of $u_w$, $C_{\varphi,\gamma}$ is the $\gamma$-moment of $\varphi$.
Let $\sigma$ small enough such that $C \sigma^\gamma |\Omega| \le \delta$, then we have $\|g_\sigma(x)-h(x)\| \le \delta$ which finishes the proof.
\end{proof}

To prove the Prop. \ref{prop:eps_dense}, we only need to prove the following lemma.
\begin{lemma}
    Suppose $J[\cdot]$ is a convex $\gamma$-H\"{o}lder continuous functional and $\mathcal{U}_W$ is $\varepsilon$-dense in $\overline\conv(\mathcal{U}_W)$. Then, there exists a $w^* \in W$ and a constant $C$ independent of $w^*$ such that
    \begin{align}
        \inf_{\mu\in \mathcal{M}} F[\mu]
        \le
        L(w^*)
        \le
        \inf_{\mu\in \mathcal{M}} F[\mu] + C\varepsilon^\gamma.
    \end{align}
\end{lemma}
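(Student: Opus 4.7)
My plan is as follows. The left-hand inequality $\inf_{\mu\in\mathcal{M}} F[\mu]\le L(w^*)$ is immediate for any $w^*\in W$, since $F[\delta_{w^*}] = L(w^*)$ and $\delta_{w^*}\in\mathcal{M}$; equivalently one may quote Prop.~\ref{prop:inf_relation}. The content of the lemma is therefore the upper bound $L(w^*) \le \inf_{\mu\in\mathcal{M}}F[\mu] + C\varepsilon^\gamma$. The strategy is to pick a (near-)minimizer $\mu^*$ of $F$, produce its barycenter $\bar u := \int_W u_w\, d\mu^*(w) \in \overline{\conv}(\mathcal{U}_W)$, and then invoke the $\varepsilon$-density hypothesis to find some $u_{w^*}\in\mathcal{U}_W$ within $L^2$-distance $\varepsilon$ of $\bar u$. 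Hölder continuity of $J$ will convert this small perturbation in function value into an $O(\varepsilon^\gamma)$ perturbation in loss.

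Next, I would verify that the infimum defining $F$ is actually attained, so that the argument above can be phrased with an honest minimizer $\mu^*$ rather than a sequence of near-minimizers. Since $W$ is compact, the Banach--Alaoglu theorem gives that $\mathcal{M}$, the set of Borel probability measures on $W$, is weak-$\ast$ compact. Because $w\mapsto u_w$ is continuous from $W$ into $C^0(\Omega)$, the map $\mu \mapsto \int_W u_w\, d\mu(w)$ is weak-$\ast$ continuous into $C^0(\Omega)$ (hence into $\mathcal{L}^2(\Omega)$ since $\Omega$ is compact), and composition with the continuous (in fact Hölder) functional $J$ shows that $F:\mathcal{M}\to\mathbb{R}$ is continuous on a compact set and therefore attains its infimum at some $\mu^*\in\mathcal{M}$. (If one prefers to avoid this step, the same bound can be obtained by taking $\mu_k$ with $F[\mu_k]\to \inf F$ and applying the following step to each $\mu_k$, letting $k\to\infty$; the resulting constant is unchanged.)

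Now set $\bar u := \int_W u_w\, d\mu^*(w)$, which lies in $\overline{\conv}(\mathcal{U}_W)$ by the characterization noted after \eqref{eq:measure_objective}. By the $\varepsilon$-density hypothesis there exists $w^*\in W$ with $\|u_{w^*} - \bar u\|_{\mathcal{L}^2(\Omega)} < \varepsilon$. Using the $\gamma$-Hölder continuity of $J$ with constant $C_J$,
\begin{align}
L(w^*) \;=\; J[u_{w^*}] \;\le\; J[\bar u] + C_J\,\|u_{w^*}-\bar u\|^\gamma_{\mathcal{L}^2(\Omega)} \;\le\; F[\mu^*] + C_J\,\varepsilon^\gamma \;=\; \inf_{\mu\in\mathcal{M}} F[\mu] + C\,\varepsilon^\gamma,
\end{align}
with $C:=C_J$ depending only on $J$ (in particular, independent of $w^*$). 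Combining with the lower bound finishes the proof.

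I do not expect any genuine obstacle here: the Hölder-to-Hölder transfer is routine, and the $\varepsilon$-density assumption is exactly what is needed to connect $\overline{\conv}(\mathcal{U}_W)$ back to $\mathcal{U}_W$. The only subtlety worth writing carefully is the existence of the minimizer $\mu^*$, which follows from weak-$\ast$ compactness of $\mathcal{M}$ and continuity of $F$ as above; this can alternatively be bypassed by working with near-minimizers at the cost of an arbitrary $o(1)$ slack absorbed into the $C\varepsilon^\gamma$ term.
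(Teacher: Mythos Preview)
Your proof is correct and follows essentially the same route as the paper: form the barycenter $\bar u=\int_W u_w\,d\mu^*(w)\in\overline{\conv}(\mathcal U_W)$, invoke $\varepsilon$-density to find $u_{w^*}$ close to it, and apply the $\gamma$-H\"older bound on $J$. The only difference is that the paper skips the weak-$\ast$ compactness argument and instead takes a near-minimizer $\mu^*$ with $F[\mu^*]\le \inf_\mu F[\mu]+\varepsilon^\gamma$, absorbing the slack by setting $C=C_\gamma+1$ --- exactly the alternative you note at the end.
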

\begin{proof}
    The first inequality is obvious. For the second inequality, let $\mu^* \in \mathcal{M}$ such that $F[\mu^*] \le \inf_{\mu\in \mathcal{M}} F[\mu] + \varepsilon^\gamma$, define
     $g(x) := \int_W u(x,w) d\mu^*(w)$,
     then there is a positive number $\delta>0$ and a constant $C_\gamma$ such that
    $| J[g]-J[u] | \le C_\gamma \|g-u\|^\gamma$.

    Since $\mathcal{U}_W$ is $\varepsilon$-dense in $\overline\conv(\mathcal{U}_W)$ and $g \in \overline\conv(\mathcal{U}_W)$, we can find $w^*$ such that
    $\|g(x)-u(x,w^*)\| \le \varepsilon$. Hence, choosing $C=C_\gamma +1$, we have
    \begin{align}
        L(w^*) \equiv J[u(x,w^*)] \le F[\mu^*] + C_\gamma \varepsilon^\gamma
        \le \inf_{\mu\in \mathcal{M}} F[\mu] + C \varepsilon^\gamma,
    \end{align}
    which finishes the proof.
\end{proof}

\subsection{Linear case}

If $J[\cdot]$ is a linear functional, then $F[\mu]$ can be rewritten as
$F[\mu]:=\int_W L(w) d\mu(w)$. The optimization problem of $F[\mu]$ over $\mu\in\mathcal{M}$ can be regarded as a dual problem of the optimization of $L(w)$ over $w\in W$. In fact, minimization of $L(w)$ is equivalent to the following problem,
\begin{align}\label{eq:reformulate_minL}
    \max_{\eta\in\mathbb{R}} \eta, ~s.t. ~L(w) \le \eta,
\end{align}
whose Lagrangian dual problem is the optimization of $F[\mu]$. Furthermore, the linearity of $J[\cdot]$ implies the following equalities,
\begin{align}
    \inf_{\rho \in P} F[\rho]
    =\inf_{\mu \in \mathcal{M}} F[\mu] = \inf_{w \in W} L(w).
\end{align}
Particularly, when $L(w)$ has global minimizers in $W$, we have the following result.
\begin{proposition}
    \label{th:min_over_measure}
    If $L(w)$ is a continuous function which has at least a global minimizer in $W$(with minimum $l_0$), i.e. the set  $M = \arg\min\limits_{w\in \Omega} L(w) $ is not empty. Then for any probability measure $\mu$ in $\mathcal{M}$, we have $F[\mu]:=\int_\Omega L(w) d\mu(w) \ge l_0$, where the equality is satisfied if and only if $\operatorname {supp}(\mu) \subset M$.
\end{proposition}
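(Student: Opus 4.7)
The proof has two distinct parts: the inequality $F[\mu] \ge l_0$ and the characterization of the equality case.

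For the inequality, the plan is to observe that by the definition of $l_0$ we have $L(w) - l_0 \ge 0$ pointwise on $W$. Integrating against the probability measure $\mu$ and using the fact that $\int_\Omega d\mu(w) = 1$ immediately yields
\begin{equation*}
F[\mu] - l_0 \;=\; \int_\Omega \bigl(L(w) - l_0\bigr)\, d\mu(w) \;\ge\; 0.
\end{equation*}
This part is essentially a one-line monotonicity argument and requires no continuity of $L$; measurability (which follows from continuity) is enough.

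For the equality characterization, the forward direction is straightforward: if $\operatorname{supp}(\mu) \subset M$, then $L(w) - l_0$ vanishes $\mu$-almost everywhere (in fact on the support of $\mu$), so the integral above is $0$ and $F[\mu] = l_0$. The converse is the more delicate direction, and this is where I expect the main (modest) obstacle. The plan is a contrapositive/topological argument: suppose $\operatorname{supp}(\mu) \not\subset M$, and pick $w_0 \in \operatorname{supp}(\mu) \setminus M$. Then $L(w_0) > l_0$, so by continuity of $L$ there exists an open neighborhood $U$ of $w_0$ and a number $\delta > 0$ such that $L(w) - l_0 \ge \delta$ for all $w \in U$. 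By the definition of support, $\mu(U) > 0$, hence
\begin{equation*}
\int_\Omega \bigl(L(w) - l_0\bigr)\, d\mu(w) \;\ge\; \int_U \bigl(L(w) - l_0\bigr)\, d\mu(w) \;\ge\; \delta\, \mu(U) \;>\; 0,
\end{equation*}
contradicting $F[\mu] = l_0$. Therefore $\operatorname{supp}(\mu) \subset M$.

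The only subtlety worth flagging is that a purely measure-theoretic conclusion of the form ``$L = l_0$ $\mu$-a.e.''\ is weaker than ``$\operatorname{supp}(\mu) \subset M$'' in general; the upgrade relies crucially on the continuity of $L$ together with the definition of the topological support (every open neighborhood of a point in the support has positive $\mu$-measure). Once that is handled as above, both directions are complete and the proof concludes.
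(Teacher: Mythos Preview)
Your proposal is correct and follows essentially the same approach as the paper: the inequality via $L(w)\ge l_0$ and $\mu$ being a probability measure, the forward equality direction by integrating over $M$, and the converse by exhibiting a set of positive $\mu$-measure on which $L-l_0$ is strictly positive. The only cosmetic difference is that the paper splits the integral over $A=\operatorname{supp}(\mu)\setminus M$ and its complement, whereas you localize to a single open neighborhood; your version is in fact slightly more explicit about why $\mu(A)>0$ and where continuity enters.
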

\begin{proof}

The inequality is apparent since $\mu$ is a probability measure and $ L(w) \ge l_0$. Hence we only need to show the condition to get equality.

(1) On one hand, if $\operatorname {supp}( \mu ) \subset M$, then $F[\mu] = \int_M L(w) d\mu(w) = \int_M l_0 d\mu(w) = l_0$.

(2) One the other hand, if  $A:=\operatorname {supp}(\mu) \setminus M$ is not empty, where both $M$ and $\operatorname {supp}(\mu)$ are closed set, then
\begin{align}
    F[\mu] = \int_{A} L(w) d\mu(w) + \int_{\operatorname {supp}(\mu) \cap M} L(w) d\mu(w)
= l_0 + \int_{A} (L(w)-l_0) d\mu(w) > l_0.
\end{align}
Here we used the definition of $A$ and the fact that $L(w) > l_0$ for all $w \in A $.
\end{proof}

\subsection{Density approximation}

It is well known that any continuous distribution can be approximated arbitrarily well by a finite mixture of normal densities \citep{Sorenson1971Recursive}.
This approximation property can be extended to the approximate identities \citep{Zeevi1997Density, Nestoridis2011Universal, Bacharoglou2010Approximation}. Particularly, an $O(1/n)$ approximation order can be obtained by a result attributed to Maurey and proved, for example, in Barron \citep{Barron1993Universal}. One application of those results is the following lemma, which is useful for our analysis.

\begin{lemma}\label{lemma:P_phi_approx}
    For any probability density function $\rho(w) \in P$ and $\varepsilon>0$, there exists $\rho_\varepsilon(w) \in P_\varphi$ such that $\|\rho_\varepsilon(w) - \rho(w)\| < \varepsilon$. Furthermore, there exists a constant $C>0$ and $n$-component mixture densities $\rho_{n}(w) \in P_\varphi$ such that
    $\|\rho_{n}(w) - \rho(w)\|^2 \le \varepsilon + \tfrac{C}{n}, \forall n \in \mathbb{N}^+$.
\end{lemma}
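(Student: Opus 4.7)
The plan is to prove both assertions of the lemma by combining a standard mollification argument with a Maurey-type discretization in $L^2(W)$. The first (qualitative density) statement will follow immediately from the second (quantitative) one by choosing $n$ large enough that $C/n$ is small compared to $\varepsilon$.

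First I would fix $\varepsilon>0$ and smooth $\rho\in P$ by the approximate identity. Since $\rho$ is a uniformly continuous probability density and $W$ is compact, the classical mollification estimate gives $\|\rho\ast\varphi_\sigma-\rho\|_{L^\infty}\to 0$ as $\sigma\to 0^+$, hence $\|\rho\ast\varphi_\sigma-\rho\|_{L^2}^2 \le \varepsilon/2$ for all sufficiently small $\sigma$; fix one such $\sigma=\sigma(\varepsilon)\in(0,1]$. The key observation is that
\[
(\rho\ast\varphi_\sigma)(w) \;=\; \int_W \varphi_\sigma(w-\mu)\,\rho(\mu)\,d\mu,
\]
so $\rho\ast\varphi_\sigma$ lies in the $L^2$-closed convex hull of the family $S_\sigma := \{\varphi_\sigma(\,\cdot\,-\mu):\mu\in\mathbb{R}^d\}\subset P_\varphi$, realized as a continuous convex combination with weights $\rho(\mu)$.

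Next I would apply Maurey's lemma (in the form used in Barron, 1993) in the Hilbert space $L^2(W)$. Every element of $S_\sigma$ has $L^2$-norm bounded by $B_\sigma := \|\varphi_\sigma\|_{L^2} = \sigma^{-d/2}\|\varphi\|_{L^2}$, so there exist centers $\mu_1,\dots,\mu_n\in\mathbb{R}^d$ (obtainable by i.i.d.\ sampling from $\rho$ and derandomizing) such that the equal-weight mixture
\[
\rho_n(w) \;:=\; \frac{1}{n}\sum_{i=1}^n \varphi_\sigma(w-\mu_i) \;\in\; P_\varphi
\]
satisfies $\|\rho_n - \rho\ast\varphi_\sigma\|^2 \le B_\sigma^2/n$. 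Combining via $\|a-c\|^2 \le 2\|a-b\|^2 + 2\|b-c\|^2$ yields
\[
\|\rho_n-\rho\|^2 \;\le\; \varepsilon \;+\; \frac{C}{n},
\qquad C := 2\sigma(\varepsilon)^{-d}\|\varphi\|_{L^2}^2,
\]
which is the quantitative assertion. For the first assertion, given any target accuracy $\varepsilon>0$, one applies the quantitative bound with $\varepsilon$ replaced by $\varepsilon^2/2$ and then chooses $n\ge \lceil 2C/\varepsilon^2\rceil$, producing some $\rho_\varepsilon:=\rho_n\in P_\varphi$ with $\|\rho_\varepsilon-\rho\|<\varepsilon$.

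The main obstacle I expect is the justification that $\rho\ast\varphi_\sigma$ really belongs to the $L^2$-closed convex hull of $S_\sigma$, which is needed for Maurey's lemma to apply. This requires approximating the probability density $\rho$ by finite empirical averages $\tfrac{1}{N}\sum_{j=1}^N \delta_{\mu_j}$ (of samples from $\rho$) and showing that convolution with $\varphi_\sigma$ maps this $L^1$/weak-$\ast$ approximation of measures into an $L^2$ approximation of densities; compactness of $W$ together with the uniform continuity and decay of $\varphi_\sigma$ make this routine but it is the technical crux. A secondary point worth noting in the write-up is that $C$ inevitably depends on $\sigma(\varepsilon)$ and hence on $\varepsilon$ — this is why the bound takes the mixed form $\varepsilon + C/n$ rather than a pure $C/n$ rate, and it matches exactly the statement of the lemma.
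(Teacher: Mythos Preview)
Your proposal is correct and follows essentially the same strategy as the paper: mollify $\rho$ by $\varphi_\sigma$ to land in the closed convex hull of $\{\varphi_\sigma(\cdot-\mu)\}$, then invoke the Maurey--Barron bound to extract an $n$-term mixture with $O(1/n)$ squared error, and combine via the triangle inequality. The only cosmetic difference is that the paper inserts an explicit Riemann-sum step to pass from the continuous mixture $\rho\ast\varphi_\sigma$ to a finite convex combination before applying Barron's result, whereas you apply Maurey directly to the continuous mixture (which is exactly the ``technical crux'' you flag); both routes are valid and yield the same $\varepsilon + C/n$ bound with $C$ depending on $\sigma(\varepsilon)$.
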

\begin{proof}
    We prove the claims by four steps below.

    (1) When $\sigma$ is small enough, the following density $\rho_\sigma$ satisfies $\|\rho_\sigma(w)-\rho(w)\|^2 \le \tfrac\varepsilon8$,
    \begin{align}\label{eq:rho_sigma}
        \rho_\sigma(w) := \int_W \rho(v) \varphi_\sigma(w-v) dv.
    \end{align}
    In fact, since $\rho$ is uniformly continuous, for any $\varepsilon_0>0$, there is a constant $\delta_0>0$ such that $|\rho(w)-\rho(w-v)|\le \varepsilon_0$ if $\|v\|\le\delta_0$. Hence we have the following inequality,
    \begin{align*}
        |\rho(w) - \rho_\sigma(w)| &= \Big|\int (\rho(w)-\rho(w-v)) \varphi_\sigma(v) dv \Big| \\
        &\le
        \int_{\|v\|<\delta_0} |\rho(w)-\rho(w-v)| \varphi_\sigma(v) dv
        +
        \int_{\|v\|\ge\delta_0} |\rho(w)-\rho(w-v)| \varphi_\sigma(v) dv\\
        &\le
        \varepsilon_0+ 2\max_w \rho(w) \int_{\|v\|\ge\delta_0} \varphi_\sigma(v) dv.
    \end{align*}
    According to the definition of $\varphi, \varphi_\sigma$, $\int_{\|v\|\ge\delta_0} \varphi_\sigma(v) dv$ can be arbitrary small if $\sigma$ is small enough. Therefore, we can choose a $\sigma$ such that $|\rho(w)-\rho_\sigma(w)| \le 2\varepsilon_0, \forall w\in W$. Particularly, let $4\varepsilon_0^2|W| = \tfrac{\varepsilon}{8}$, then we have $\|\rho_\sigma(w)-\rho(w)\|^2 \le \tfrac\varepsilon8$.

    (2) The Riemann summation for the integral in \eqref{eq:rho_sigma} can approximate $\rho_\sigma$ uniformly. In fact, for any $m$-partition $\{B_i: i=1,...,m\}$ of $W$, one can define 
    \begin{align}\label{eq:rho_sigma_m}
        \rho_{\sigma, m}(w) = \sum_{i=1}^m \rho(v_i) |B_i| \varphi_\sigma(w-v_i), \quad v_i \in B_i, s.t.~
        \rho(v_i)|B_i| = \int_{B_i} \rho(v) dv.
    \end{align}
    Then,
    \begin{align*}
        |\rho_{\sigma, m}(w)-\rho_\sigma(w)|
        &=
        \Big|
        \sum_{i=1}^m \big(\int_{B_i} \rho(v)\varphi_\sigma(w-v)dv 
        -\rho(v_i) |B_i| \varphi_\sigma(w-v_i) \big) \Big|\\
        &\le 
        \sum_{i=1}^m \Big|\int_{B_i} \big(\rho(v)\varphi_\sigma(w-v) 
        -\rho(v_i) \varphi_\sigma(w-v_i)\big) dv \Big|\\
        &\le 
        \sum_{i=1}^m \Big|\int_{B_i} (\rho(v)-\rho(v_i)) \varphi_\sigma(w-v) dv \Big|\\
        &\quad+
        \sum_{i=1}^m \Big|\int_{B_i} \rho(v_i)(\varphi_\sigma(w-v)-\varphi_\sigma(w-v_i)\big) dv \Big|.
    \end{align*}
    Since $\rho, \varphi_\sigma$ are uniformly continuous functions, for any $\delta>0$, there exist a partition $\{B_i\}$ small enough (and $m$ large enough) such that
    \begin{align*}
        |\rho_{\sigma, m}(w)-\rho_\sigma(w)|\le 
        \tfrac\delta2 \sum_{i=1}^m \Big|\int_{B_i} \varphi_\sigma(w-v) dv \Big|+\tfrac\delta 2
        \sum_{i=1}^m \Big|\int_{B_i} \rho(v_i) dv \Big|
        \le \delta, \forall w \in W,
    \end{align*}
    which implies $\|\rho_{\sigma,m}-\rho\|^2 \le \delta^2 |W|$.

    (3)  According to \eqref{eq:rho_sigma_m}, $\rho_{\sigma,m}$ is a finite convex combination of $\{\varphi_{\sigma}(w-v_i):i=1,...,m\}$ where $\|\varphi_\sigma(w-v_i)\|^2$ is bounded by a constant $C_\sigma$ only dependent on $\sigma$. Using Barron's result~\citep{Barron1993Universal}, there exist an $n$-term convex combination $\rho_{n}$ such that $\|\rho_n-\rho_{\sigma,m}\|^2 \le \tfrac{C_\sigma^2}{n}$.

    (4) Finally, let $8\delta^2|W|=\varepsilon, C := 2 C^2_\sigma$, then we have
    \begin{align}
        \|\rho-\rho_n\|^2 \le 4\|\rho-\rho_\sigma\|^2
        +4\|\rho_\sigma-\rho_{\sigma,m}\|^2
        +2\|\rho_{\sigma,m}-\rho_n\|^2 \le \varepsilon + \tfrac{C}{n}.
    \end{align}

\end{proof}

\begin{lemma}
    Under the same conditions as in Prop.~\ref{prop:inf_relation}, we have
\begin{align}\label{eq:equality_in_lemma}
    \inf_{\rho \in P_\varphi} F[\rho]
    = \inf_{\rho \in P} F[\rho].
\end{align}
Furthermore, for any $\rho(w) \in P$ and $\varepsilon>0$, there exists a constant $C>0$ and a $n$-component mixture density $\rho_{n} \in P_\varphi$ such that
\begin{align}
    F[\rho_{n}] - F[\rho]
    \le \big(\varepsilon + \tfrac{C}{n}\big)^{\gamma/2}, \forall n \in \mathbb{N}^+.
\end{align}

\end{lemma}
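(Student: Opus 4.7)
The plan is to combine the density-level approximation already established in Lemma~\ref{lemma:P_phi_approx} with the H\"older continuity of $J$ to pass from closeness in the parameter distribution to closeness in the loss. The only new ingredient needed is a bound of the form $\|g-g_n\|_{L^2(\Omega)}\le C_1\|\rho-\rho_n\|_{L^2(W)}$, where $g(x):=\int_W u_w(x)\rho(w)\,dw$ and $g_n(x):=\int_W u_w(x)\rho_n(w)\,dw$. This bound follows from Cauchy--Schwarz together with the fact that $(x,w)\mapsto u_w(x)$ is continuous on the compact set $\Omega\times W$, hence uniformly bounded by some $M>0$, yielding
\begin{align*}
    \|g-g_n\|_{L^2(\Omega)}^2
    \le \int_\Omega \Big(\int_W |u_w(x)|^2 dw\Big)\|\rho-\rho_n\|_{L^2(W)}^2 dx
    \le M^2|W||\Omega|\,\|\rho-\rho_n\|_{L^2(W)}^2.
\end{align*}

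With that estimate in hand, the quantitative half of the lemma is immediate: given $\rho\in P$ and $\varepsilon>0$, Lemma~\ref{lemma:P_phi_approx} produces, for each $n\in\mathbb{N}^+$, an $n$-component mixture $\rho_n\in P_\varphi$ with $\|\rho_n-\rho\|^2\le \varepsilon+\tfrac{C_0}{n}$. Combining this with the Cauchy--Schwarz bound above gives $\|g_n-g\|_{L^2(\Omega)}\le C_1(\varepsilon+\tfrac{C_0}{n})^{1/2}$, and then the $\gamma$-H\"older continuity of $J$ on $\mathcal{L}^2(\Omega)$ yields
\begin{align*}
    F[\rho_n]-F[\rho]
    =J[g_n]-J[g]
    \le C_\gamma\,\|g_n-g\|_{L^2(\Omega)}^\gamma
    \le C_\gamma C_1^\gamma\,\big(\varepsilon+\tfrac{C_0}{n}\big)^{\gamma/2}.
\end{align*}
Absorbing $C_\gamma C_1^\gamma$ into $C_0$ (or adjusting $\varepsilon$) produces the stated bound $\big(\varepsilon+\tfrac{C}{n}\big)^{\gamma/2}$.

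For the equality $\inf_{P_\varphi} F = \inf_P F$, the inclusion $P_\varphi\subset P$ immediately gives ``$\ge$''. The reverse inequality follows by taking, for any $\rho\in P$, the sequence $\rho_n$ above: sending $n\to\infty$ and $\varepsilon\to 0^+$ shows $\inf_{P_\varphi}F\le F[\rho]$, and then taking the infimum over $\rho\in P$ completes the argument. I do not expect a genuine obstacle here: the density-approximation work is already done in Lemma~\ref{lemma:P_phi_approx}, and the only subtlety is making sure that $L^2$-closeness of densities transfers to $L^2$-closeness of the resulting functions in $\Ucal$, which is where compactness of $\Omega$ and $W$ and continuity of $u_w(x)$ are used. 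The choice of $C$ and the packaging of the various constants into a single $C$ are the only bookkeeping steps.
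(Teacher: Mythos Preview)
Your proposal is correct and follows essentially the same route as the paper's proof: bound $\|g-g_n\|_{L^2(\Omega)}$ by a constant times $\|\rho-\rho_n\|_{L^2(W)}$ using boundedness of $u$ on the compact set $\Omega\times W$ (you spell out the Cauchy--Schwarz step, the paper just asserts the existence of the constant), apply the $\gamma$-H\"older continuity of $J$, and then invoke Lemma~\ref{lemma:P_phi_approx}. The only cosmetic difference is in the constant bookkeeping: the paper applies Lemma~\ref{lemma:P_phi_approx} with the pre-scaled tolerance $\varepsilon/C_\rho^2$ so that the prefactor cancels exactly, whereas you absorb the prefactor afterward---both require rescaling $\varepsilon$ as well as $C$, which is legitimate since $\varepsilon>0$ is arbitrary.
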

\begin{proof}
    We only need to prove the second part which implies the equality \eqref{eq:equality_in_lemma}. Note that $u(x,w)$ is bounded for all $x\in \Omega$ and $w\in W$. Hence for any $\rho, \tilde\rho \in P$, there exists a constant $C_\rho$ such that
    \begin{align}
        |F[\tilde\rho] - F[\rho]|
        \le C_\gamma \Big\| \int_W u(x,w) (\tilde\rho(w)-\rho(w)) dw\Big\|^\gamma
        \le
        C_\rho^\gamma \|\tilde\rho-\rho\|^\gamma.
    \end{align}
    According to the second part of Lemma \ref{lemma:P_phi_approx}, there exists a constant $C>0$ and $n$-component mixture densities $\rho_{n}(w) \in P_\varphi$ such that
    $\|\rho_{n}(w) - \rho(w)\|^2_2 \le \tfrac{\varepsilon}{C^2_\rho} + \tfrac{C/C^2_\rho}{n}, \forall n \in \mathbb{N}^+$, hence we have:
    \begin{align}
        |F[\rho_{n}] - F[\rho]|
        \le
        C_\rho^\gamma \|\rho_n-\rho\|^\gamma
        \le \big(\varepsilon + \tfrac{C}{n}\big)^{\gamma/2}, \forall n \in \mathbb{N}^+.
    \end{align}
\end{proof}

Combining the results in Prop. \ref{prop:inf_relation}, we have the following corollary.
\begin{corollary}
    Under the same conditions as in Prop.~\ref{prop:inf_relation}, for any $\varepsilon>0$, there exists a constant $C>0$ and a $n$-component mixture density $\rho_{n} \in P_\varphi$ such that
\begin{align}\label{eq:approx_eps_n}
    F[\rho_{n}] - \inf_{\mu \in \mathcal{M}} F[\mu]
    \le \big(\varepsilon + \tfrac{C}{n}\big)^{\gamma/2}, \forall n \in \mathbb{N}^+.
\end{align}
\end{corollary}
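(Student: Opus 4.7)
The plan is to obtain this corollary as a direct bookkeeping consequence of two results already in hand: Prop.~\ref{prop:inf_relation}, which asserts $\inf_{\mu \in \mathcal{M}} F[\mu] = \inf_{\rho \in P} F[\rho]$, and the preceding lemma, which gives a quantitative rate at which any fixed $\rho \in P$ is approximated in $F$-value by an $n$-component mixture in $P_\varphi$. So the whole argument is a chaining of these two pieces, with a standard $\eta$-argument inserted because the infimum over $P$ need not be attained.

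Concretely, I would proceed as follows. First, invoke Prop.~\ref{prop:inf_relation} to reduce the goal to showing the existence of $\rho_n \in P_\varphi$ with $F[\rho_n] - \inf_{\rho \in P} F[\rho] \le (\varepsilon + C/n)^{\gamma/2}$. Second, given $\varepsilon > 0$, fix a small auxiliary number $\eta > 0$ and pick $\rho^* \in P$ with $F[\rho^*] \le \inf_{\rho \in P} F[\rho] + \eta$, which is possible by the definition of an infimum. Third, apply the preceding lemma to this $\rho^*$, with a free parameter $\varepsilon'$ to be chosen, to produce a constant $C' > 0$ (depending on $\rho^*$ and $\varepsilon'$) and an $n$-term mixture $\rho_n \in P_\varphi$ such that
\begin{align*}
F[\rho_n] - F[\rho^*] \le (\varepsilon' + C'/n)^{\gamma/2}, \quad \forall n \in \mathbb{N}^+.
\end{align*}
Chaining the two estimates gives
\begin{align*}
F[\rho_n] - \inf_{\mu \in \mathcal{M}} F[\mu] \le \eta + (\varepsilon' + C'/n)^{\gamma/2}.
\end{align*}

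The final step is to choose $\eta$ and $\varepsilon'$ so that the right-hand side is dominated by $(\varepsilon + C/n)^{\gamma/2}$ for a suitable $C = C(\varepsilon, \rho^*)$. Since $\eta$ can be taken arbitrarily small once $\rho^*$ is selected (we pick $\eta$ much smaller than $\varepsilon^{\gamma/2}$, say $\eta = \tfrac12 \varepsilon^{\gamma/2}$), and since rescaling $\varepsilon' = \varepsilon/2$ only affects the $n^{-1}$ term by a multiplicative factor that can be absorbed into $C$, elementary manipulations using the concavity (hence subadditivity with $f(0)=0$) of $t \mapsto t^{\gamma/2}$ yield the bound $(\varepsilon + C/n)^{\gamma/2}$ with a new constant $C$.

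The main obstacle is purely technical: the gap $\eta$ stemming from non-attainment of $\inf_{\rho \in P} F[\rho]$ must be absorbed into a single expression of the form $(\varepsilon + C/n)^{\gamma/2}$. There is no new analytic content beyond Prop.~\ref{prop:inf_relation} and the preceding lemma; the proof is essentially two lines of estimation plus a parameter rescaling.
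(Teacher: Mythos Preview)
Your approach matches the paper's: the paper gives no proof beyond the sentence ``Combining the results in Prop.~\ref{prop:inf_relation}, we have the following corollary,'' and your chaining of Prop.~\ref{prop:inf_relation} with the preceding lemma via an $\eta$-near-minimizer $\rho^*\in P$ is exactly the natural way to make that precise.

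One small slip in the final absorption step: subadditivity of $t\mapsto t^{\gamma/2}$ reads $(a+b)^{\gamma/2}\le a^{\gamma/2}+b^{\gamma/2}$, which is the \emph{wrong} direction for bounding $\eta+(\varepsilon'+C'/n)^{\gamma/2}$ above by a single term $(\varepsilon+C/n)^{\gamma/2}$. What you actually need is the power-mean inequality $a^{\gamma/2}+b^{\gamma/2}\le 2^{1-\gamma/2}(a+b)^{\gamma/2}$ (valid for $0<\gamma\le 2$). Writing $\eta=\eta_0^{\gamma/2}$, this gives
\[
\eta_0^{\gamma/2}+(\varepsilon'+C'/n)^{\gamma/2}\le \bigl(2^{2/\gamma-1}(\eta_0+\varepsilon')+2^{2/\gamma-1}C'/n\bigr)^{\gamma/2},
\]
and choosing $\eta_0,\varepsilon'$ so that $2^{2/\gamma-1}(\eta_0+\varepsilon')=\varepsilon$ and setting $C=2^{2/\gamma-1}C'$ finishes the argument. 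With this correction your proof is complete.
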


\subsection{Nested minimization}

For a convex functional $F_N[\rho_N]$ on $N$-dimension density function $\rho_N$, we define $F[\rho]$ as a functional on 1-dimension density function $\rho$,
\begin{align}
    F[\rho] = \min_{\rho_N : \rho_N \text{'s marginal is}~ \rho}  F_N[\rho_N].
\end{align}
Further, if we parameterize $\rho$ by a convex combination of some density functions $\phi_i$,
\begin{align}
    \rho = \sum_{i=1}^n \alpha_i \phi_i,
\end{align}
then the loss $F[\rho]$ becomes a function $l$ on $\alpha$. One merit of this definition is the convexity of $F_N[\rho_N]$ can be inherited.
\begin{proposition}
    If $F_N[\rho_N]$ is convex with respect to $\rho_N$, then $F[\rho]$ is convex with respect to $\rho$, and $l(\alpha)$ is convex with respect to $\alpha$.
\end{proposition}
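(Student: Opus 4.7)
The plan is to prove the two convexity statements in sequence, with the first one (convexity of $F[\rho]$) being the essential step and the second (convexity of $l(\alpha)$) following by composition with an affine map.

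For convexity of $F[\rho]$, I would take two one-dimensional densities $\rho^{(1)}$ and $\rho^{(2)}$ together with a weight $t \in [0,1]$, and set $\rho := t\rho^{(1)} + (1-t)\rho^{(2)}$. The key observation is that if $\rho_N^{(1)}$ and $\rho_N^{(2)}$ are $N$-dimensional densities with marginals $\rho^{(1)}$ and $\rho^{(2)}$ respectively, then $\rho_N := t\rho_N^{(1)} + (1-t)\rho_N^{(2)}$ has marginal equal to $\rho$, because marginalization is a linear operation. Thus $\rho_N$ is a feasible candidate in the minimization defining $F[\rho]$. Applying convexity of $F_N$ then gives
\begin{align*}
F[\rho] \le F_N[\rho_N] \le t F_N[\rho_N^{(1)}] + (1-t) F_N[\rho_N^{(2)}].
\end{align*}
Taking infima on the right-hand side over $\rho_N^{(1)}$ and $\rho_N^{(2)}$ with the prescribed marginals yields $F[\rho] \le t F[\rho^{(1)}] + (1-t) F[\rho^{(2)}]$, as desired.

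For convexity of $l(\alpha)$, observe that the map $T : \alpha \mapsto \sum_{i=1}^n \alpha_i \phi_i$ from $\mathbb{R}^n$ into the space of one-dimensional densities is affine. Since $l = F \circ T$ and $F$ has just been shown to be convex, composition with an affine map preserves convexity. Concretely, for $\alpha, \beta$ in the simplex and $\lambda \in [0,1]$,
\begin{align*}
l(\lambda\alpha + (1-\lambda)\beta) = F\bigl[\lambda T(\alpha) + (1-\lambda) T(\beta)\bigr] \le \lambda l(\alpha) + (1-\lambda) l(\beta).
\end{align*}

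The main subtlety I anticipate is whether the minimum in the definition of $F[\rho]$ is actually attained; if not, one must instead pick $\varepsilon$-near-minimizers $\rho_N^{(1)}, \rho_N^{(2)}$ with $F_N[\rho_N^{(k)}] \le F[\rho^{(k)}] + \varepsilon$, carry out the same convex combination argument to obtain $F[\rho] \le t F[\rho^{(1)}] + (1-t) F[\rho^{(2)}] + \varepsilon$, and let $\varepsilon \to 0$. This is a routine adjustment but worth mentioning to keep the argument rigorous given that the statement is written with a minimum rather than an infimum. No further technical machinery is required beyond linearity of marginalization and convexity of $F_N$.
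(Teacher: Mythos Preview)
Your argument is correct and follows essentially the same route as the paper's proof: both use linearity of marginalization to show that a convex combination of feasible $N$-dimensional densities is feasible for the mixed marginal, then apply convexity of $F_N$ and pass to the infimum. Your treatment is in fact slightly more complete than the paper's, since you explicitly handle the $l(\alpha)$ claim via composition with an affine map and address the inf-versus-min issue with an $\varepsilon$-argument, neither of which the paper spells out.
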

\begin{proof}
    It is enough to show that for any numbers $a_1,a_2\in (0,1), a_1+a_2=1$, and probability densities $\rho_1, \rho_2$, the following inequality holds,
    \begin{align}
        F[a_1 \rho_1 + a_2 \rho_2]
        \le a_1 F[\rho_1] + a_2 F[\rho_2].
    \end{align}
    In fact, we have
    \begin{align*}
        F[a_1 \rho_1 + a_2 \rho_2] &=
        \inf_{\rho_N \sim a_1 \rho_1 + a_2 \rho_2} F_N[\rho_N] \\
        &\le
        \inf_{\rho_{N,i} \sim \rho_i} F_N[a_1 \rho_{N,1} + a_2 \rho_{N,2}]\\
        &\le
        a_1 \inf_{\rho_{N,1} \sim \rho_1} F_N[\rho_{N,1}]
        +
        a_2 \inf_{\rho_{N,2} \sim \rho_2} F_N[\rho_{N,2}]\\
        &=
        a_1 F[\rho_1] + a_2 F[\rho_2],
    \end{align*}
    which finishes the proof.
\end{proof}

\newpage

\section{Details of experiments}
\label{sec:appendix_experiments}

\subsection{1D regression}

\textbf{Direct optimization. }
In the tests of direct optimization in parameter space, we randomly initialize the parameters $(\theta,a)$ with $a_i$ sampled from normal distribution $\mathcal{N}(0,\sigma_a^2)$ and  $\theta_i=\arctan(\xi_i) + \eta_i \pi$ where $\xi_i$ sampled from uniform distribution on $[-1,1]$ and $\eta_i \in \{0,1\}$ obeys Bernoulli$(\tfrac12)$ distribution. We train the neural network with 100 hidden nodes using SGD optimizer with batch size 200, max-step 300000 and learning rate $\varepsilon_\theta$ and $\varepsilon_a$ for $\theta$ and $a$ respectively. Hence the training process contains three hyper-parameters, $\sigma_a, \varepsilon_a$ and $\varepsilon_\theta$, which need to be tunned. In addition, we use leaky-ReLU instead of ReLU to improve training by reducing dead neurons, and the following configuration of hyper-parameters are considered:
\begin{align}
    (\sigma_a, \varepsilon_a, \varepsilon_\theta) \in
    \{1,2,4,8,16,32\} \times
    \{\text{logspace(-6,0,11)}\} \times
    \{\text{logspace(-6,-3,7)}\},
\end{align}
where the best configuration for $f(x)$ with $K=10, t=0$ is $(8, 0.063, 3.16\text{e-}5)$ (see Fig.~\ref{fig:1d_N100_profiles}(a)).
In Fig.~\ref{fig:1d_N100_SGD}, the losses of configurations with $\varepsilon_\theta = 3.16\text{e-}5$ are given. Note that the loss is small at large learning rate for $a$, but the performances is not stable, for example the iteration diverges at $\varepsilon_a=1$.
\begin{figure}[thb!]
    \center
    \includegraphics[width=0.7\textwidth]{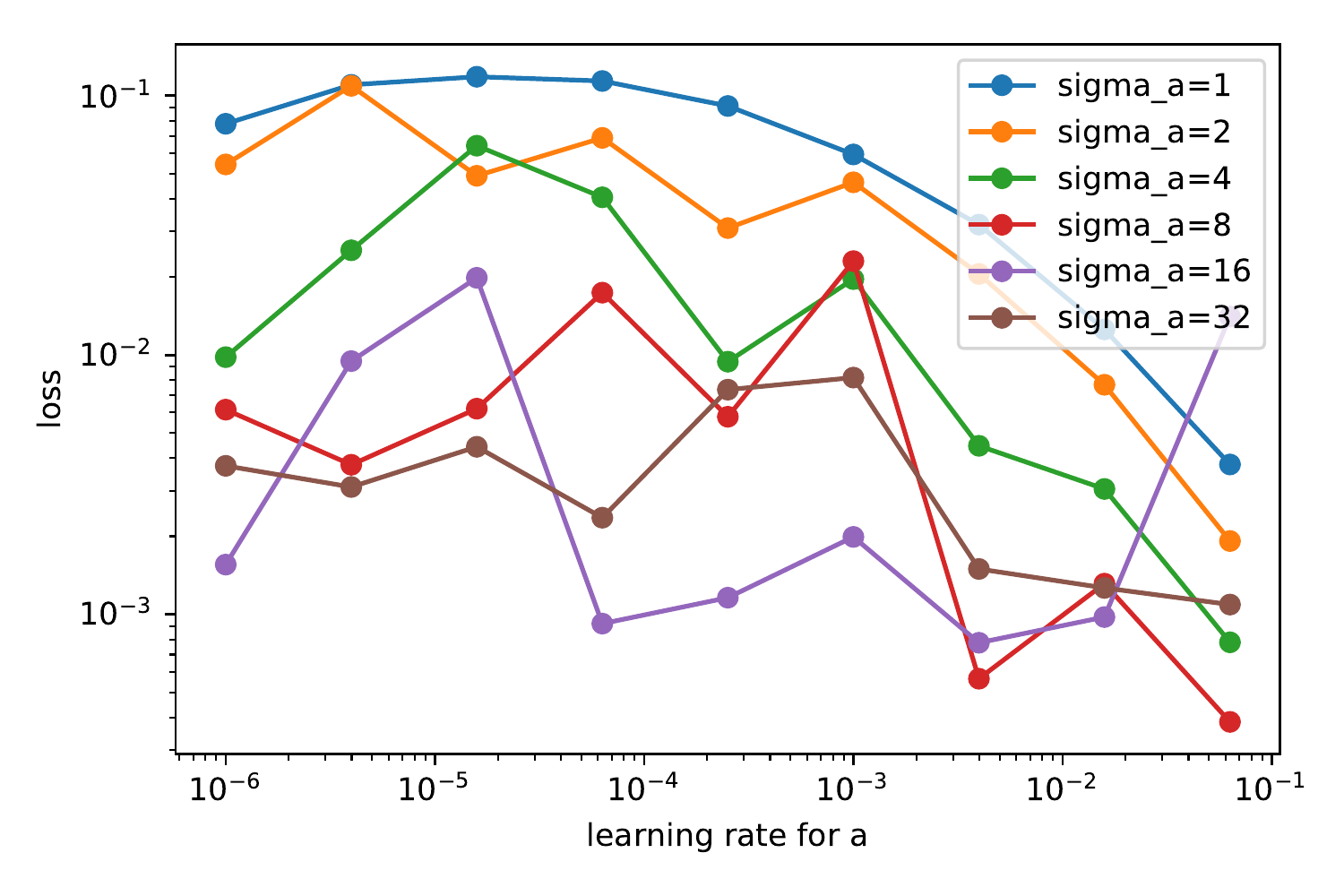}
    \\
    \caption{Direct optimization in parameter space with 100 hidden nodes. The losses of configurations with learning rate $ \varepsilon_\theta = 3.16\text{e-}5$. }
    \label{fig:1d_N100_SGD}
\end{figure}

For the parameter space problem, which is highly non-convex, it is expected that a proper choice of hyper-parameters is crucial to get good performances.
We observed two factors that significantly influence the performances. On the one hand, since our data $x$ are located in $[-1,1]$, it is evident that the nodes with $\tan(\theta_i) \not \in [-1,1]$ have little contribution to learning; therefore we need a small learning rate to prevent nodes leaving the critical region. On the other hand, the scale of $a_i$ effects the output magnitude of the neural network, hence it needs to match the magnitude of $f(x)$, especially when the learning rate is small, in which case the scale of $a_i$ hardly changes overtraining process. As a result, hidden nodes tend to bunch up together (in regions of significant changes in $f$ or its derivatives), which reduce the number of effective nodes to approximate $f$ effectively (see Fig.~\ref{fig:1d_N100_profiles}(b)).

\begin{figure}[thb!]
    \centering
    \subfigure[The best configuration]{
        \includegraphics[width=0.46\textwidth]{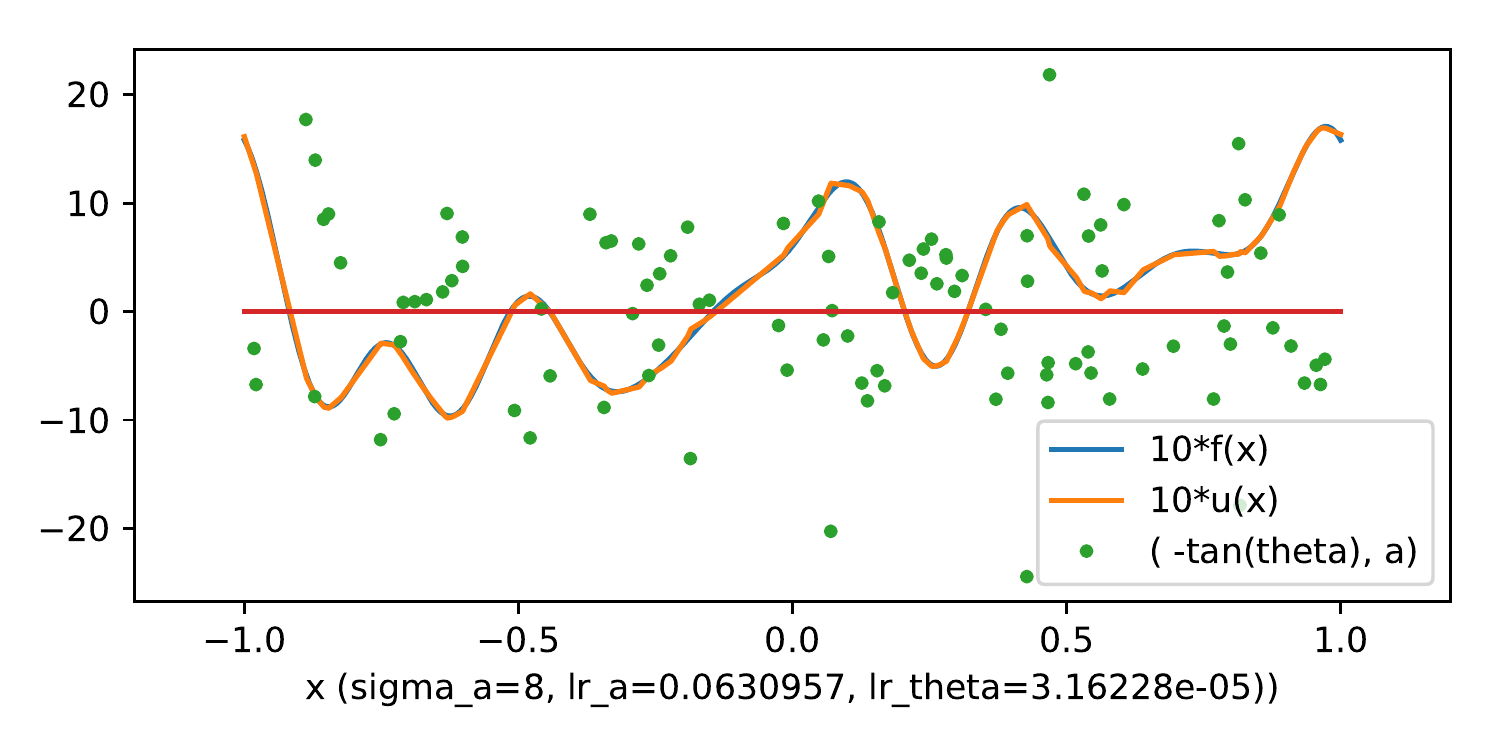}
        }
    \subfigure[An example configuration]{
        \includegraphics[width=0.46\textwidth]{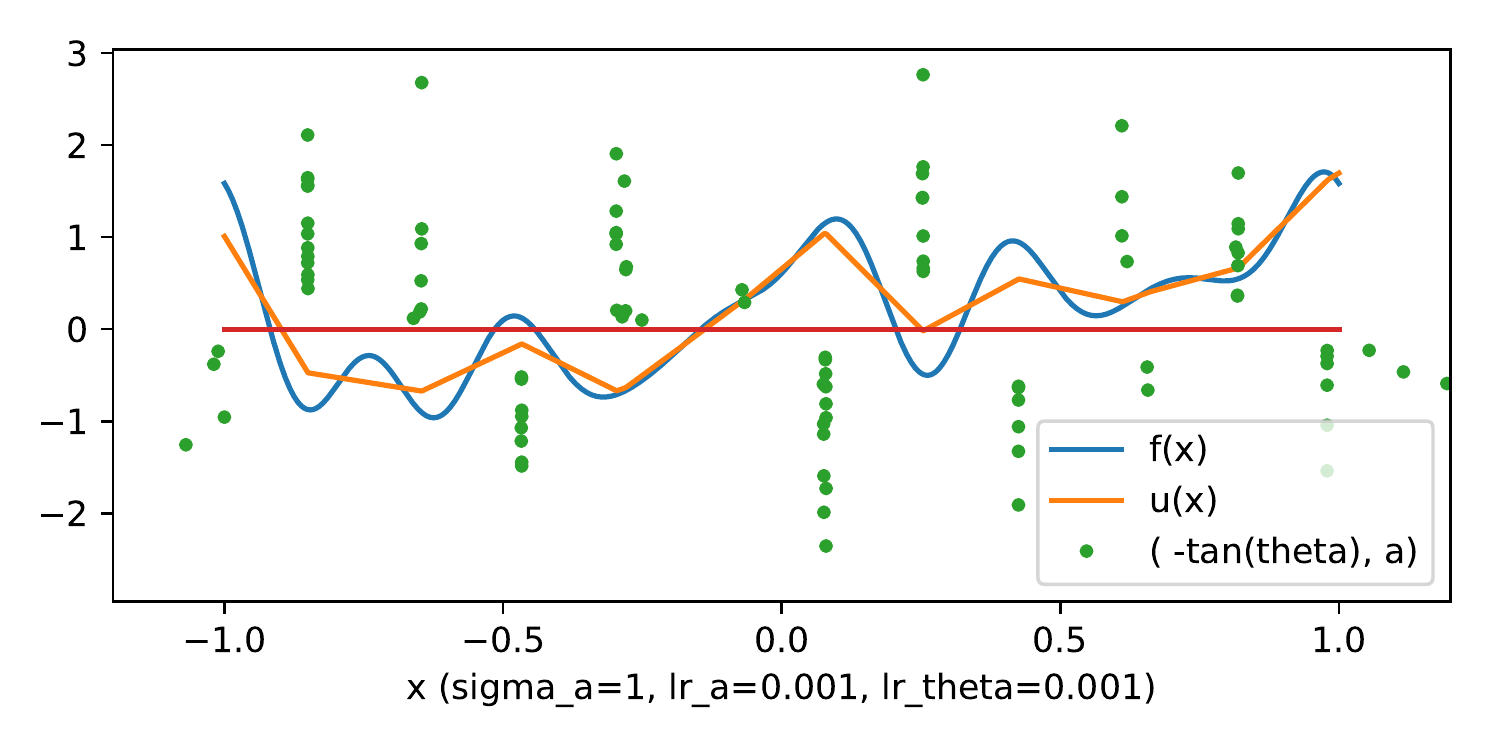}
        }
    \caption{Direct optimization in parameter space with 100 hidden nodes. Profile of the best configuration (a) and a configuration where the nodes go up together (b).}
    \label{fig:1d_N100_profiles}
\end{figure}

\begin{figure}[thb!]
    \centering
    \subfigure[$K=10,t=0$]{
        \includegraphics[width=0.46\textwidth]{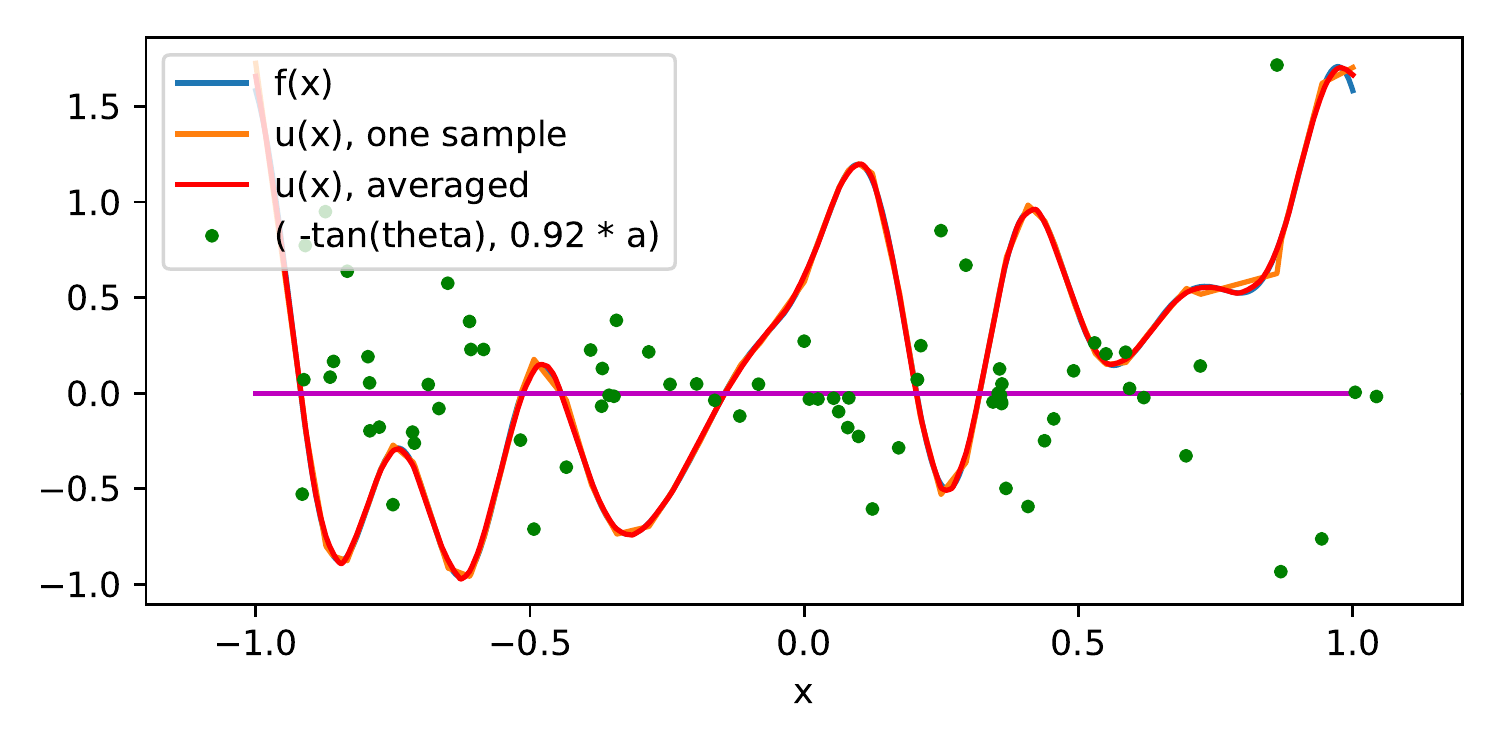}
        }
    \subfigure[$K=20,t=0$]{
        \includegraphics[width=0.46\textwidth]{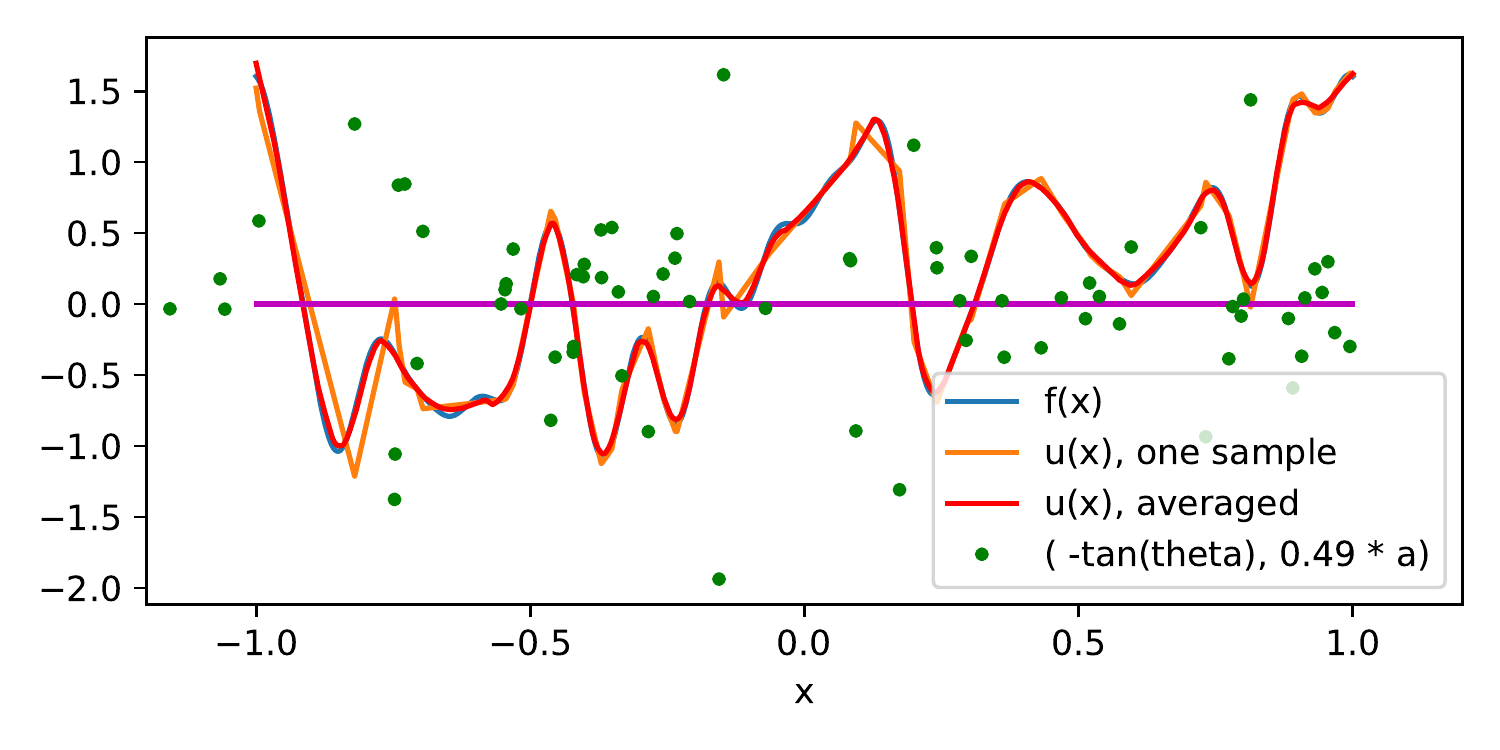}
        }
    \subfigure[$K=10,t=1$]{
        \includegraphics[width=0.46\textwidth]{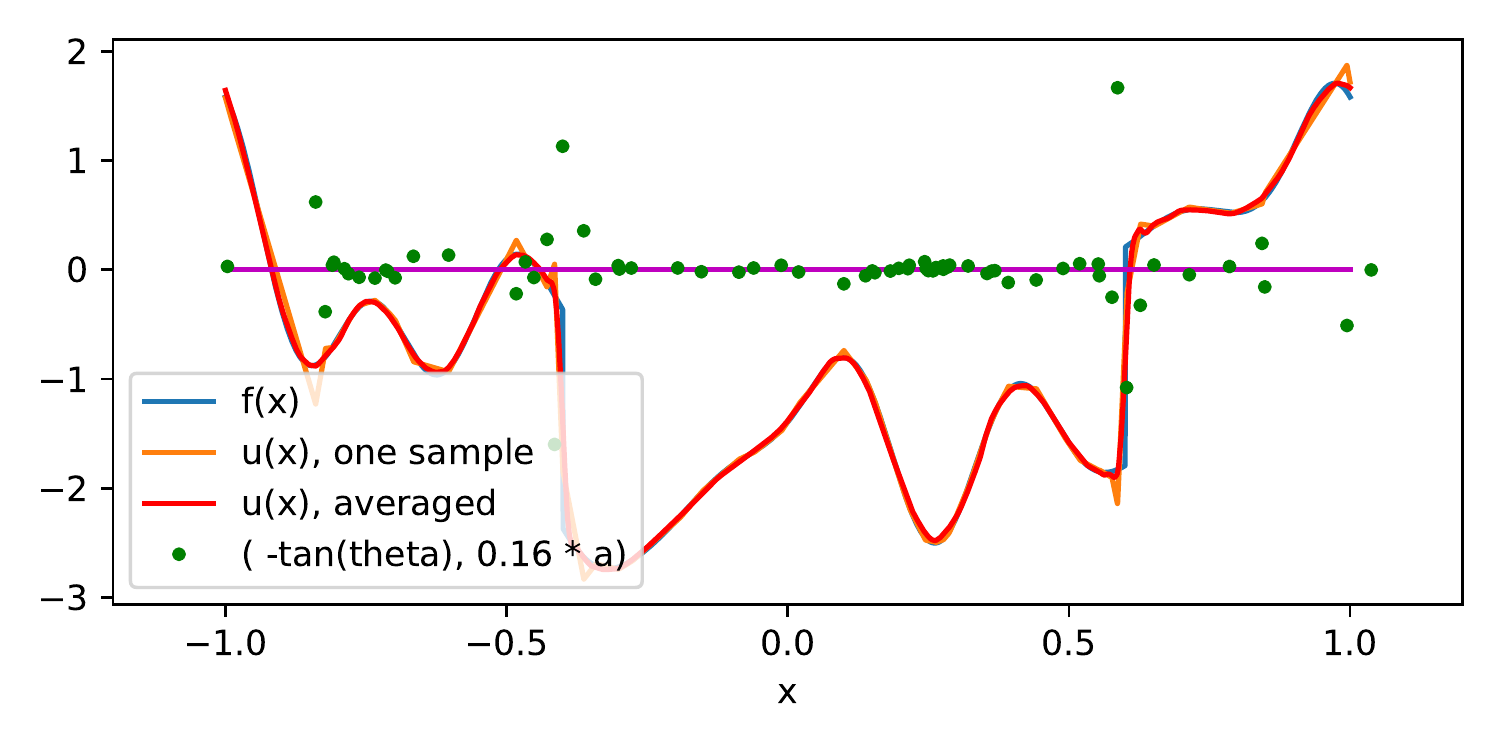}
        }
    \subfigure[$K=10,t=5$]{
        \includegraphics[width=0.46\textwidth]{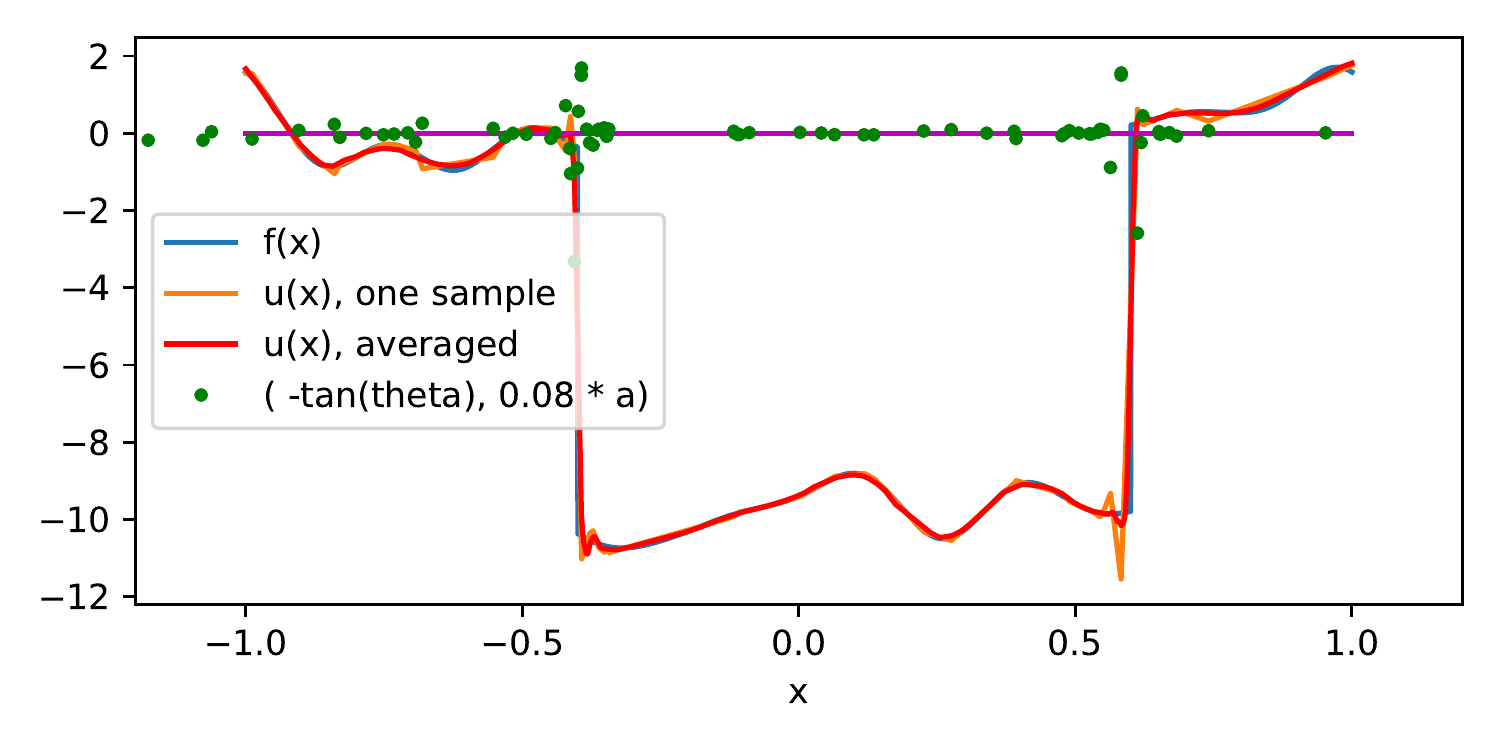}
        }
    \caption{Optimization in distribution space. Profile of the approximations given by Alg.~\ref{alg:A}.}
    \label{fig:1d_N100_profiles_minmodel}
\end{figure}


\textbf{Optimization in distribution space}
The convex combination coefficients $\alpha$ are initialized by normalizing a vector sampled randomly from the uniform distribution on $[0,1]$. In Fig.~\ref{fig:1d_N100_profiles_minmodel}, the profile of the approximations are given by Alg.~\ref{alg:A}, where both $u(x)$ coming from one sample and $u(x)$ averaged over 20 samples are given. In Fig.~\ref{fig:1d_N100_task02b}(a), the effect of $R$ (the number of terms to average in the Monte-Carlo integration in the Alg.\ref{alg:A}) is shown where the number of mixture basis $n=100$ is fixed. In Fig.~\ref{fig:1d_N100_task02b}(b), we add a result that uses the Gaussian basis instead of the triangle distribution basis shown in Fig.~\ref{fig:robustness}(a).

\begin{figure}[thb!]
    \centering
    \subfigure[The effect of $R$]{
        \includegraphics[width=0.47\textwidth]{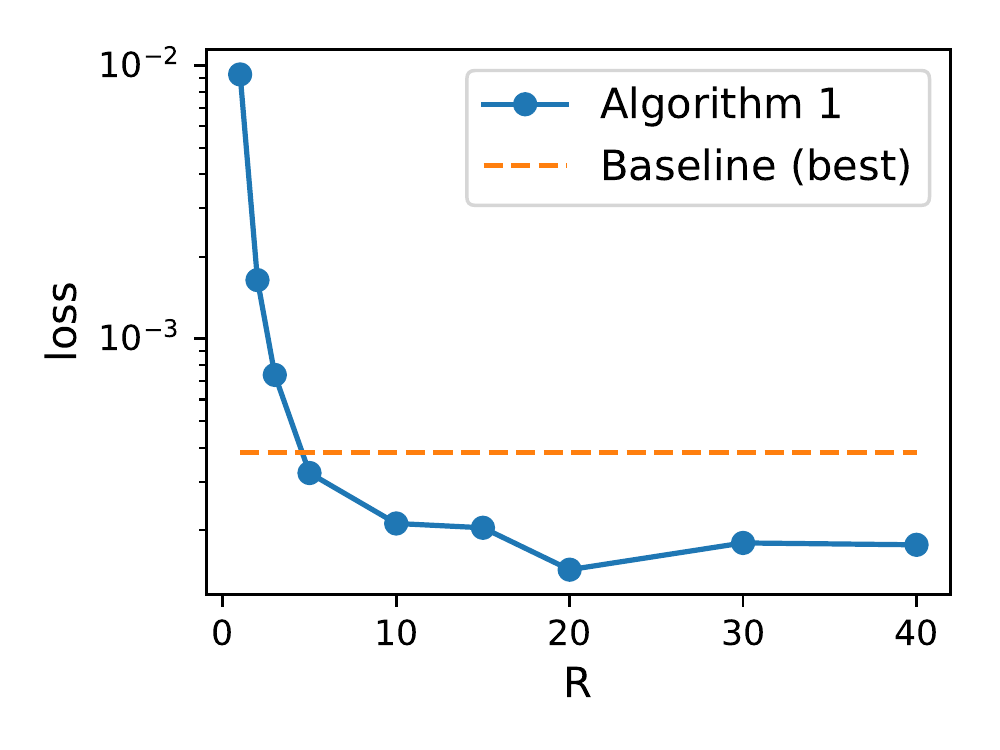}
        } 
    \subfigure[Gaussian basis ($R=20$)]{
        \includegraphics[width=0.46\textwidth]{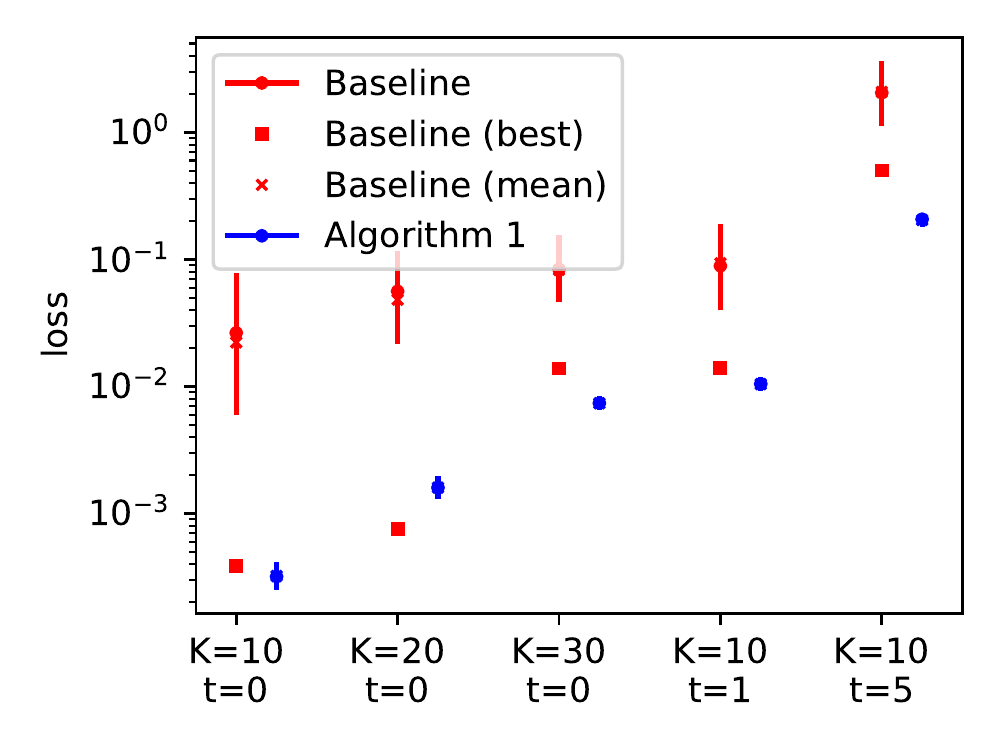}
        } 
    \caption{Optimization in distribution space. The number of mixture basis $n=100$ is fixed.}
    \label{fig:1d_N100_task02b}
\end{figure}

~
\newpage
\subsection{MNIST classification}

\textbf{Direct optimization}
In the tests of direct optimization in parameter space, the weights in first layer are drawn from a uniform distribution within $[-l, l]$ where
$l = \sqrt{6 / (784 + N)} s$ and $s$ is a scale. The other weights are initialized by the Glorot scheme \citep{Glorot2010Understanding}. Then we train the neural network by applying 30 epochs of Adam \citep{Kingma2014Adam} with learning rate 0.001. The scale $s$ is changed from $2^{-16}$ to $2^{16}$ and the resulting loss and accuracy for $N=100$ and $N=1000$ are given in Figure  \ref{fig:mnist_task01}.

\begin{figure}[thb!]
    \centering
    \subfigure[$N=100$]{
        \includegraphics[width=0.45\textwidth]{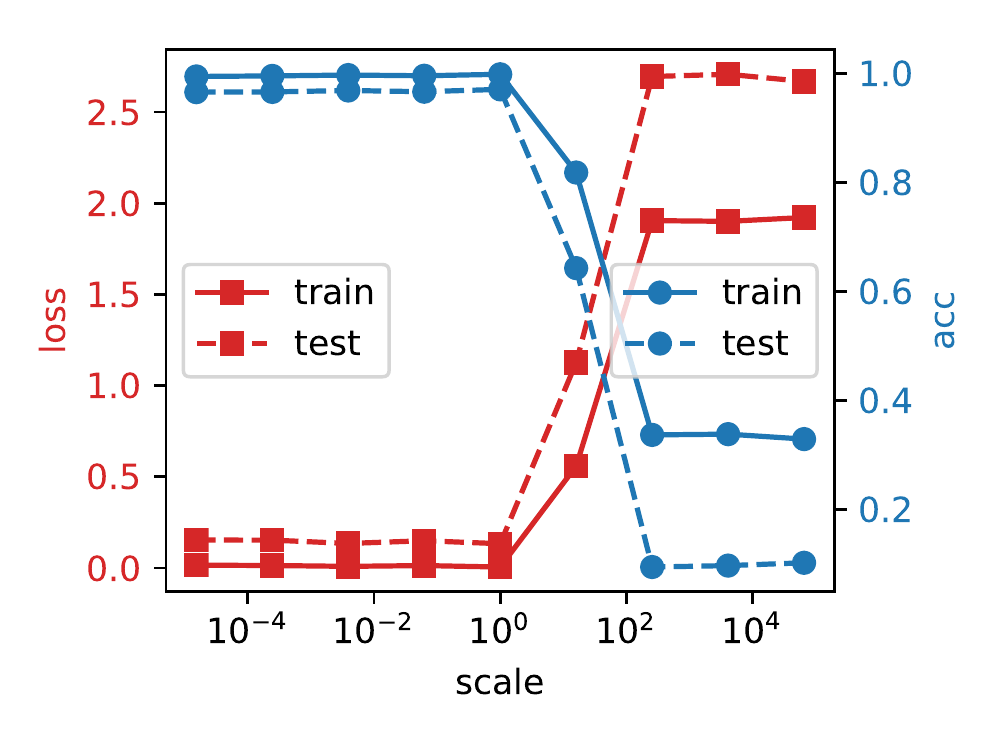}
        }
    \subfigure[$N=1000$]{
        \includegraphics[width=0.45\textwidth]{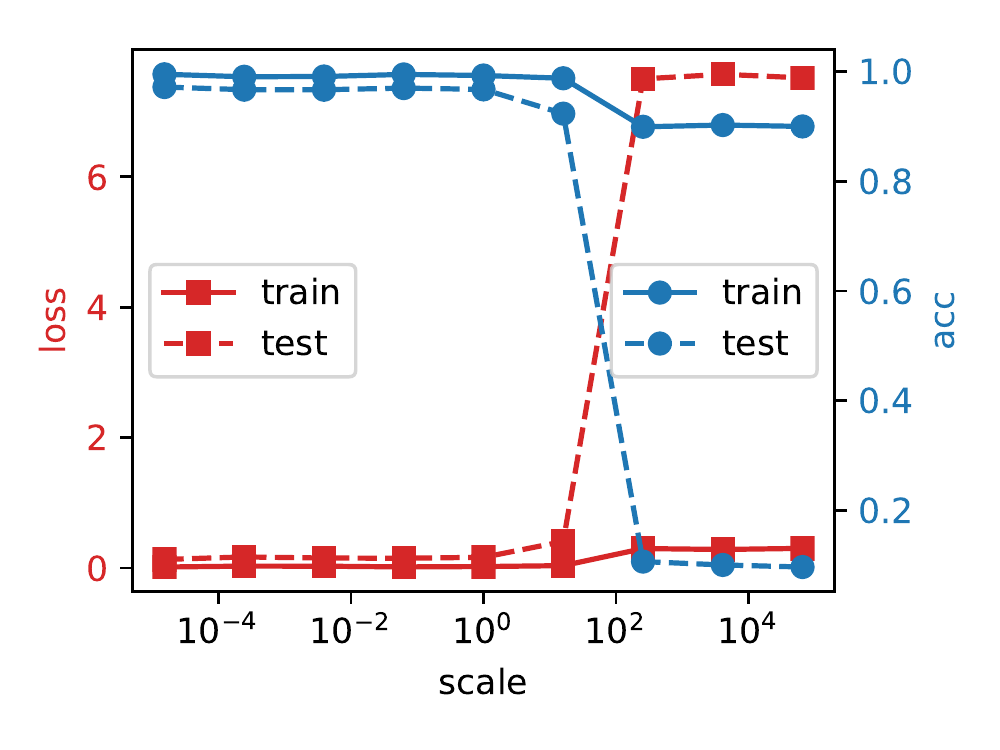}
        }
    \caption{Direct optimization in parameter space for MNIST classification.}
    \label{fig:mnist_task01}
\end{figure}

\textbf{Optimization in distribution space} In Fig.~\ref{fig:mnist_task03zi}(a-b), the effect of numerical parameter $R$ in the Alg.\ref{alg:A} are tested where the number of basis $n=65$ is fixed. The effect of $n$ for $N=1000$ is given in Fig.~\ref{fig:mnist_task03zi}(c) where $R=10$ is fixed.
In Fig.~\ref{fig:mnist_task03zi_example}, an example of the training result is given, where the initial convex combination coefficient $\alpha_i$ is uniform and after $k=300$ iterations, $\alpha_i$ is zero if $\lambda_i \not\in [4,12]$. Note that initializing $\alpha$ according to the logarithmic normal distribution results in similar results.

\begin{figure}[thb!]
    \centering
    \subfigure[Effect of $R$ ($N=100)$]{
        \includegraphics[width=0.3\textwidth]{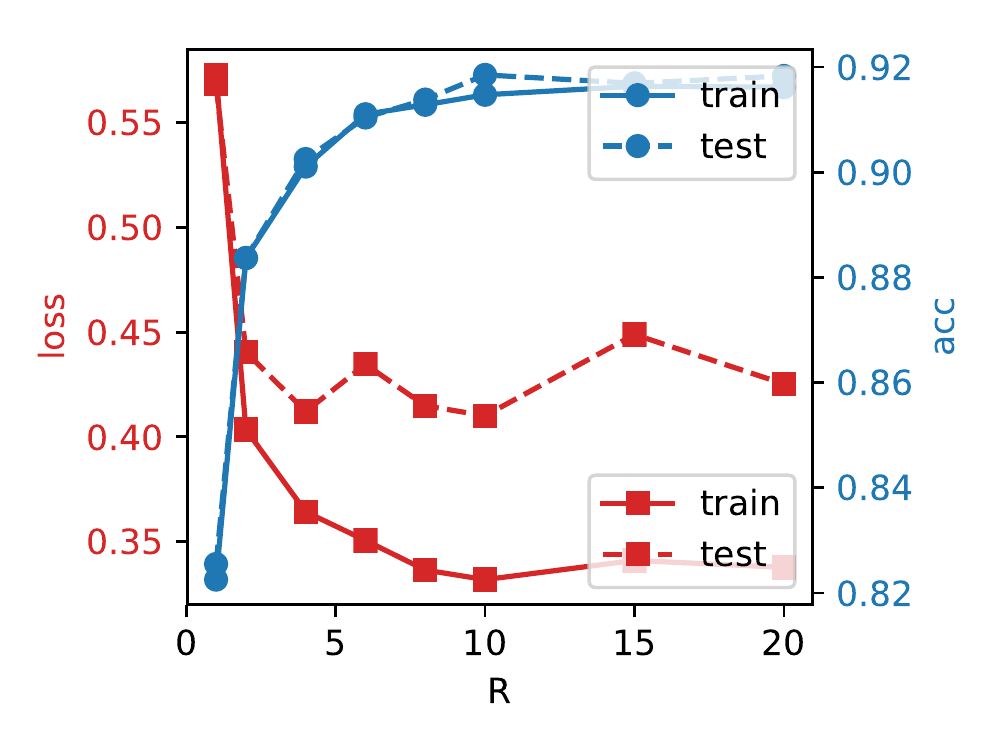}
        }
    \subfigure[Effect of $R$ ($N=1000)$]{
        \includegraphics[width=0.3\textwidth]{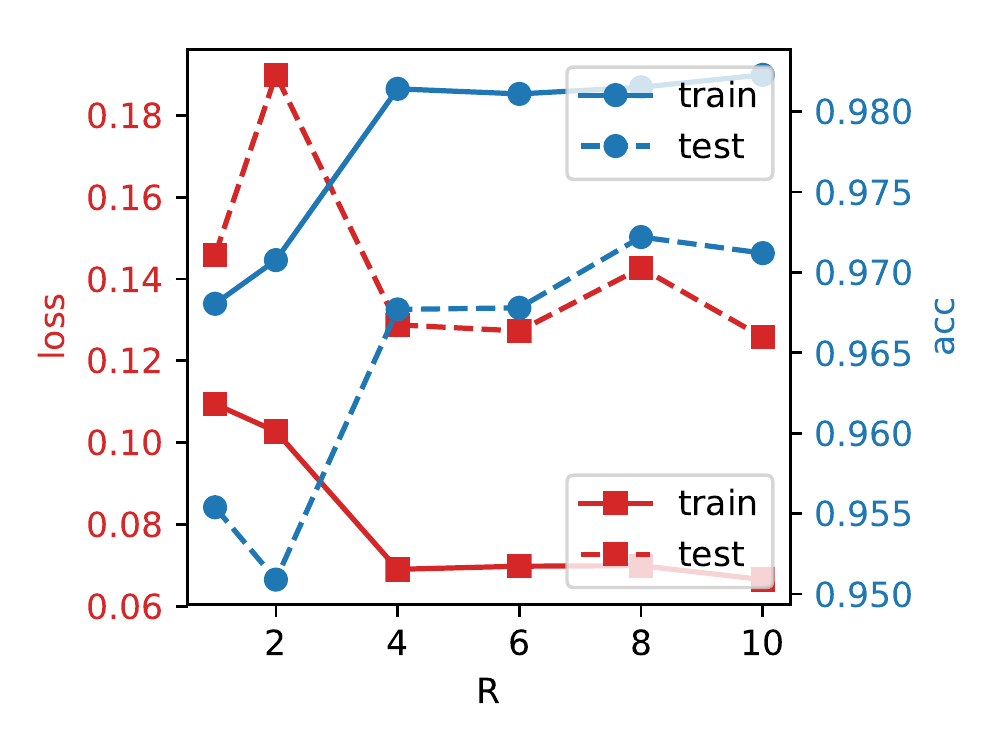}
        }
    \subfigure[Effect of $n$ ($N=1000)$]{
        \includegraphics[width=0.3\textwidth]{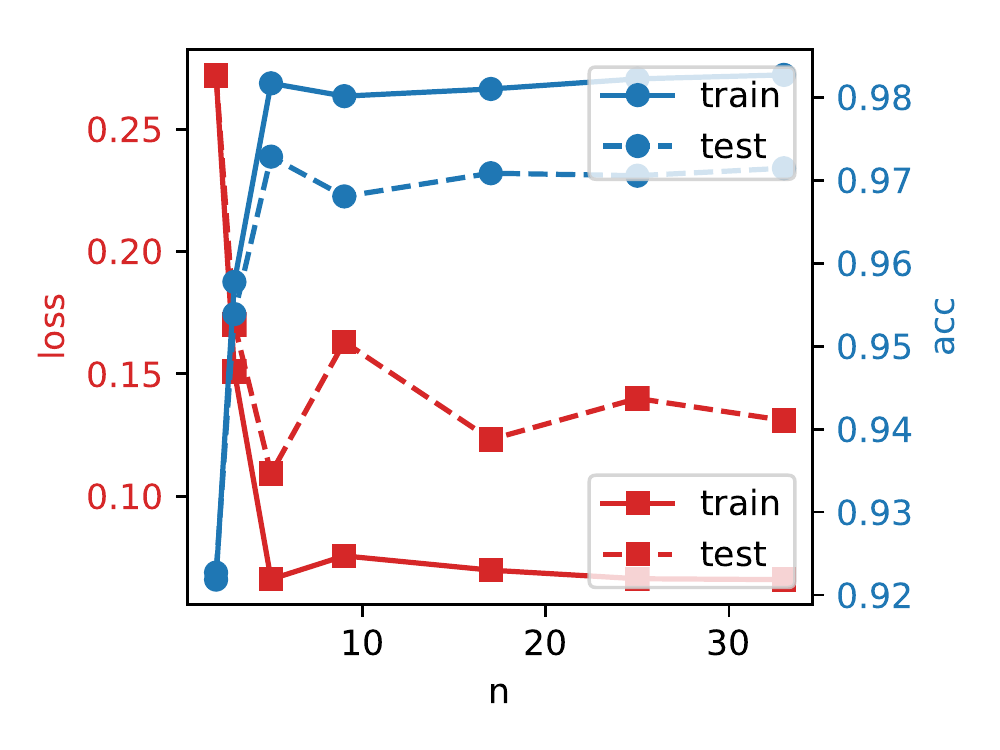}
        }
    \caption{Optimization in distribution space for MNIST classification.}
    \label{fig:mnist_task03zi}
\end{figure}

\begin{figure}[htp!]
    \centering
    \includegraphics[width=0.92\textwidth]{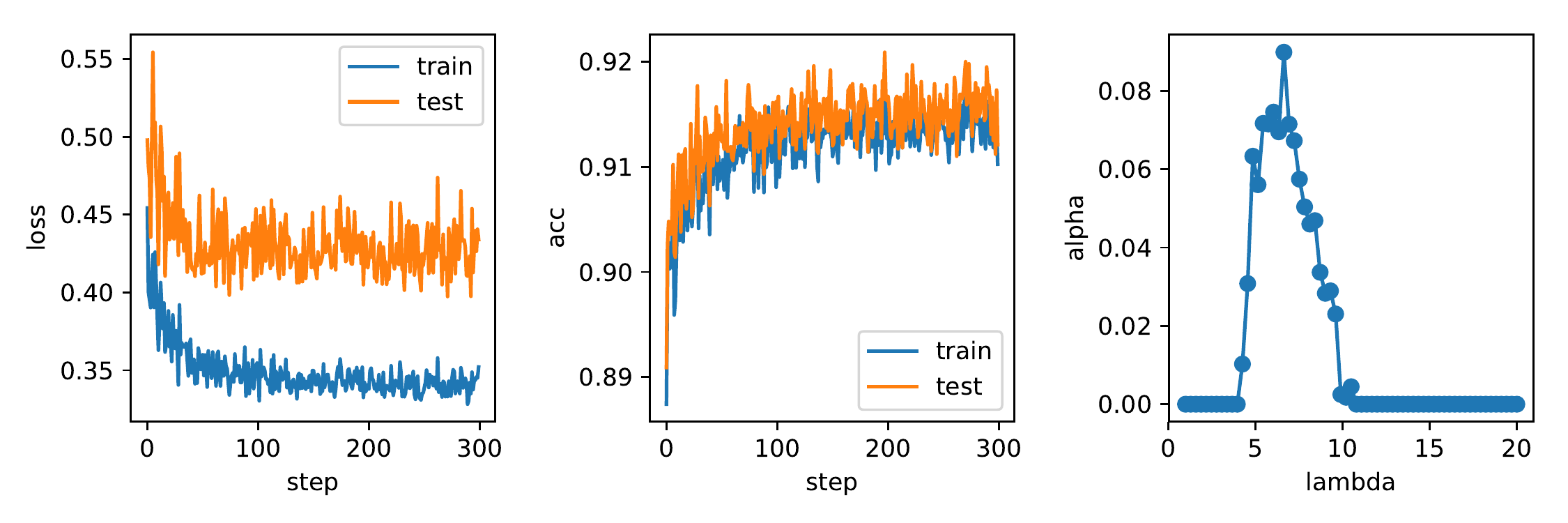}
    \caption{An example of the optimization in distribution space for MNIST classification ($N=100, n=65, R=10$).}
    \label{fig:mnist_task03zi_example}
\end{figure}

\subsection{MNIST classification with multi-layer neural networks}
\label{sec:mnist_cnn}

The multi-layer neural network consists of two convolutional layers (32 and 16 channels respectively) with $5\times5$ filters, each followed by leaky ReLU (with parameter 0.05) and $2\times2$ max pooling, followed by a fully connected layer and a batch normalization layer. For the MNIST training dataset, we only use the first 5000 samples (about 10\% of the MNIST dataset) in both the direct optimization in parameter space and the optimization in distribution space.

\textbf{Direct optimization} In the tests of direct optimization in parameter space, the other weights are initialized by the Glorot scheme \citep{Glorot2010Understanding} except that the bias in convolutional layers is drawn from a Gaussian distribution $\mathcal{N}(0,\sigma_0^2)$. Then we train the neural network by applying 30 epochs of Adam \citep{Kingma2014Adam} with learning rate 0.001. The scale $\sigma_0$ is changed from $10^{-3}$ to $10^{3}$ and the resulting loss and accuracy are given in Fig.~\ref{fig:mnist_cnn_direct}.

\begin{figure}[thb!]
    \centering
    \includegraphics[width=0.7\textwidth]{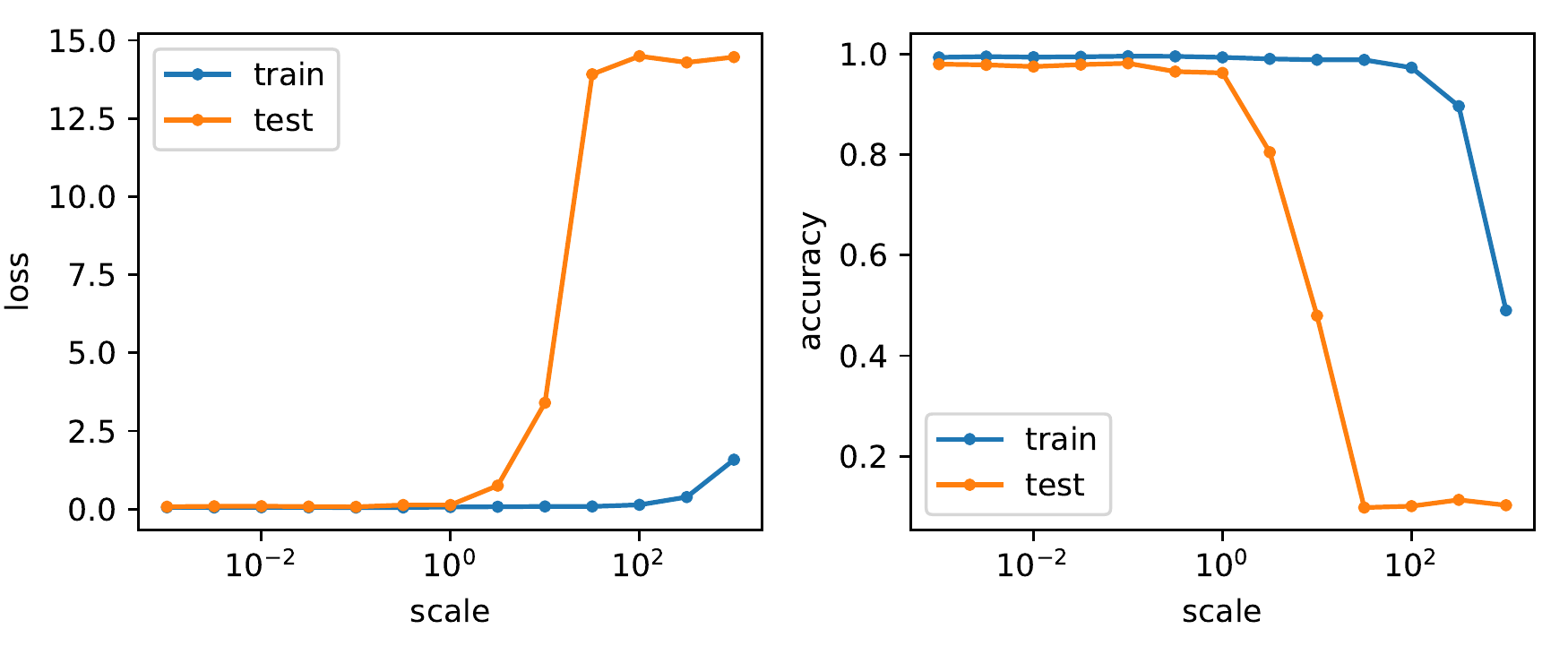}
    \caption{Direct optimization in parameter space for MNIST classification (CNN).}
    \label{fig:mnist_cnn_direct}
\end{figure}

\textbf{The distribution basis} For the hyper-network, we adopt the architecture designed in \citep{Deutsch2018Generating}, which contains an extractor and some weight generators, where the target network consists of two convolutional layers, followed by two fully connected layers, each uses the leaky-ReLU activation. Here we use a narrower configuration, which takes the input vector $z$ to be 150 dimensional, drawn from a uniform distribution. The total number of parameters in this hyper-network is 268090. The hyper-network is trained on fashion-MNIST dataset \citep{Xiao2017Fashion} by minimizing the following loss function $\tilde L(\Phi)$ with hyper-parameter $\lambda$:
\begin{equation}
    \tilde L \left(\Phi | p_{\text {noise }}, p_{\text {data }}\right)
    =\lambda L_{\text {accuracy }}\left(\Phi | p_{\text {noise }}, p_{\text {data }}\right)
    +L_{\text {diversity }}\left(\Phi | p_{\text {noise }}\right),
\end{equation}
where $L_{\text {accuracy }}$ and $L_{\text {diversity }}$ are the accuracy loss and diversity loss (quantified by the negative of the entropy of the outputs) respectively. Note that the hyper-network can generated the weights for both convolutional layers and fully connected layers, while we use the weights in convolutional layers as our $w_{\text{CNN}}$, and for the weights in fully connected layers we only use their standard deviation (which affect the scale of the output of the target network) to determine the parameter $\sigma$ in initialization of $w_{FC}$. We enumerate the distributions $\{\tilde \phi_i(w_{\text{CNN}})\}$ as a collection of hyper-networks with different $\lambda \in \text{logspace}(2,3,17)$ and different training steps $k_{\text{hyper}} \in \{5000,10000,20000\}$. Once the $w_{\text{CNN}}$ is generated, we fix it and turn to determine the $w_{FC}$ which is initialized (element-wise) by independent identically distributed Gaussian $\mathcal{N}(0,\sigma^2)$, and then updated by one epoch Adam-optimizer with learning rate $10^{-3}$ on 5000 MNIST training samples.

\textbf{Optimization in distribution space} 
In Fig.~\ref{fig:mnist_cnn_example}, an example of the training result with $R=50$ and $n=51$ is given, where the initial convex combination coefficient $\alpha_i$ is uniform and after $k=300$ iterations, $\alpha_i$ is zero if its corresponding $\lambda$ is small than 200.

\begin{figure}[thb!]
    \center
    \includegraphics[width=0.92\textwidth]{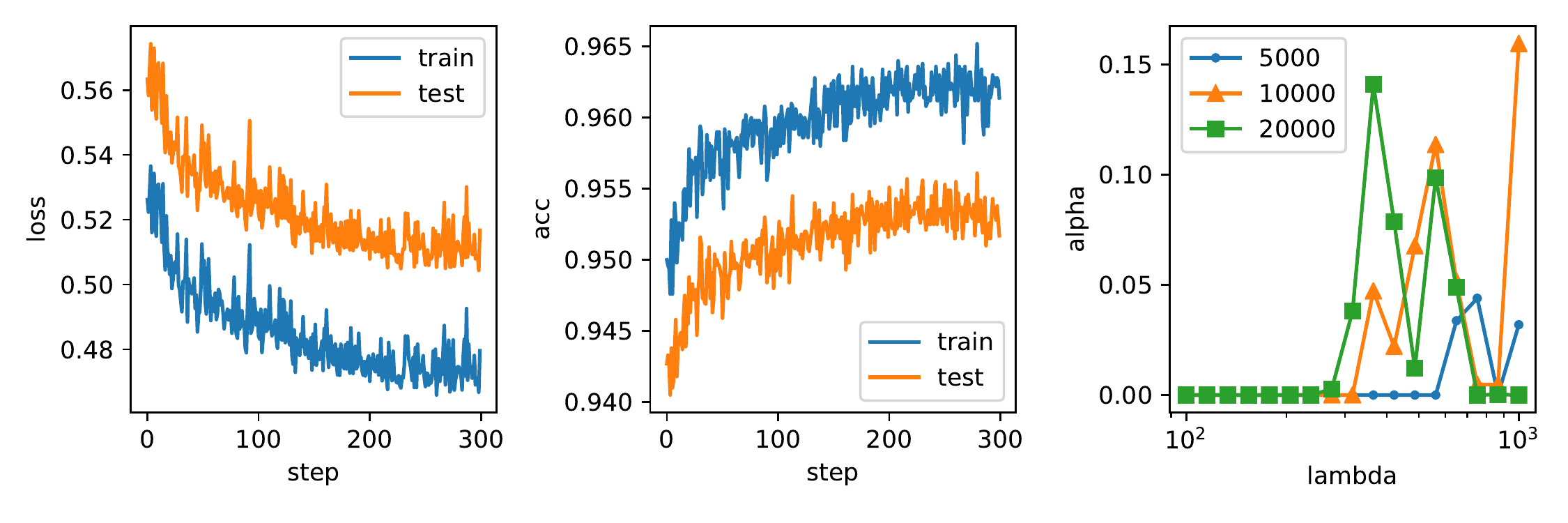}
    \caption{
        An example of the optimization in distribution space for MNIST classification with multi-layer neural networks ($n=51,R=50$).
    }
    \label{fig:mnist_cnn_example}
\end{figure}

\end{document}